\newcommand{\orcid}[1]{\href{https://orcid.org/#1}{\includegraphics[width=10pt]{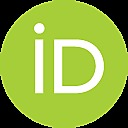}}}
\def\BibTeX{{\rm B\kern-.05em{\sc i\kern-.025em b}\kern-.08em
    T\kern-.1667em\lower.7ex\hbox{E}\kern-.125emX}}
\begin{document}
\title{High-order Multi-view Clustering for Generic Data}
\author{ Erlin Pan, Zhao Kang
\thanks{This work was supported in part by the
National Natural Science Foundation of China under Grant 62276053 (Corresponding author: Zhao Kang).\\
\indent Erlin Pan and Zhao Kang are with with the School of Computer Science and Engineering, University of Electronic Science and Technology of China, Chengdu, China. Email: wujisixsix6@gmail.com, zkang@uestc.edu.cn}}

\markboth{IEEE TRANSACTIONS ON NEURAL NETWORKS AND LEARNING SYSTEMS}%
{High-order Multi-view Clustering for Generic Data}

\maketitle

	\begin{abstract}
		Graph-based multi-view clustering has achieved better performance than most non-graph approaches. However, in many real-world scenarios, the graph structure of data is not given or the quality of initial graph is poor. Additionally, existing methods largely neglect the high-order neighborhood information that characterizes complex intrinsic interactions. To tackle these problems, we introduce an approach called high-order multi-view clustering (HMvC) to explore the topology structure information of generic data. Firstly, graph filtering is applied to encode structure information, which unifies the processing of attributed graph data and non-graph data in a single framework. Secondly, up to infinity-order intrinsic relationships are exploited to enrich the learned graph. Thirdly, to explore the consistent and complementary information of various views, an adaptive graph fusion mechanism is proposed to achieve a consensus graph. Comprehensive experimental results on both non-graph and attributed graph data show the superior performance of our method with respect to various state-of-the-art techniques, including some deep learning methods. 
	\end{abstract}

\begin{IEEEkeywords}
graph clustering, multi-view learning, high-order information, graph filtering.
\end{IEEEkeywords}

	\section{Introduction}
	 \IEEEPARstart{A}{s}  one fundamental task in machine learning, clustering aims to divide a collection of unlabeled objects into multiple categories. The classical clustering methods, like $k-$means and DBSCN, are heavily dependent on the first-order relationships between samples. Specifically, they employ the pre-defined distances that measure the similarity between data points to perform the clustering. However, samples are not only similar to their neighbors but also tend to be similar to the neighbors’ neighbors. Therefore, the first-order relationships are incomplete \cite{line1} and the valuable information hidden in high-order proximity should be explored.
	We take two moons data as an example and show its KNN graphs ($K=5$) in different orders $n$ in Fig. \ref{twoMoons}. The bold red edge connects two neighbor nodes that belong to different classes. The number of wrong connections (red edges) and the ratio of accurate edges over whole ones are denoted by \textit{NwE} and \textit{AccE}, respectively. It's obvious that high-order graph decreases wrong relations.\\
	 \indent To incorporate high-order interactions, numerous methods have been proposed in the past decades. They can be categorized into two classes based on the used information: topological structure-based methods 
	 and attribute-based methods 
	 . The topological structure-based methods focus on graph data and obtain high-order proximity by processing the adjacency matrix.
	 \cite{20high} is a spectral clustering method for directed graph, demonstrating that high-order structure in the adjacency relationship is an important factor for clusters.  \cite{hose3} proposes a graph clustering framework that can preserve high-order structures with the help of a novel diffusion core.  \cite{lin2021graph} regards high-order proximity as a random walk from one node to the other with different steps. A polynomial of adjacency matrices is applied to capture neighborhood relationships of various orders. However, the raw graph is often sparse and noisy, high-order information directly derived from the product of raw adjacency matrix accumulates errors, which could lead to inferior performance. Additionally, topological structure-based methods are not applicable to non-graph data.\\
	  \indent Different from topological structure-based methods, which largely neglect the high-order interactions derived from the attributes of data, the attribute-based methods can handle both graph and non-graph data. One straightforward approach to obtain the high-order similarity is constructing hypergraph. \cite{tangchang} captures the high-order correlations among samples as well as various views via a hypergraph-induced hyper-Laplacian regularization term and a tensor factorization term. \cite{ehc6} obtains the second-order or high-order affinity by a score approximated by the Taylor expansion of solution at each iteration.  \cite{hcpr7} explores the high-order relationships from local and global perspectives.  \cite{cahsm8} is a robust spectral clustering method through a context-aware hypergraph similarity measure, in which the affinity information from three types of hypergraphs is combined together to explore the intrinsic interactions. Although  \cite{largeyp9, largehp10} show that larger hyperedges are better for hypergraph clustering, hypergraph-based methods suffer from large storage and complicated computation in the construction stage.
	\begin{figure*}[t]
        	\centering
        	\subfloat[]{
        		\includegraphics[width=0.31\textwidth]{./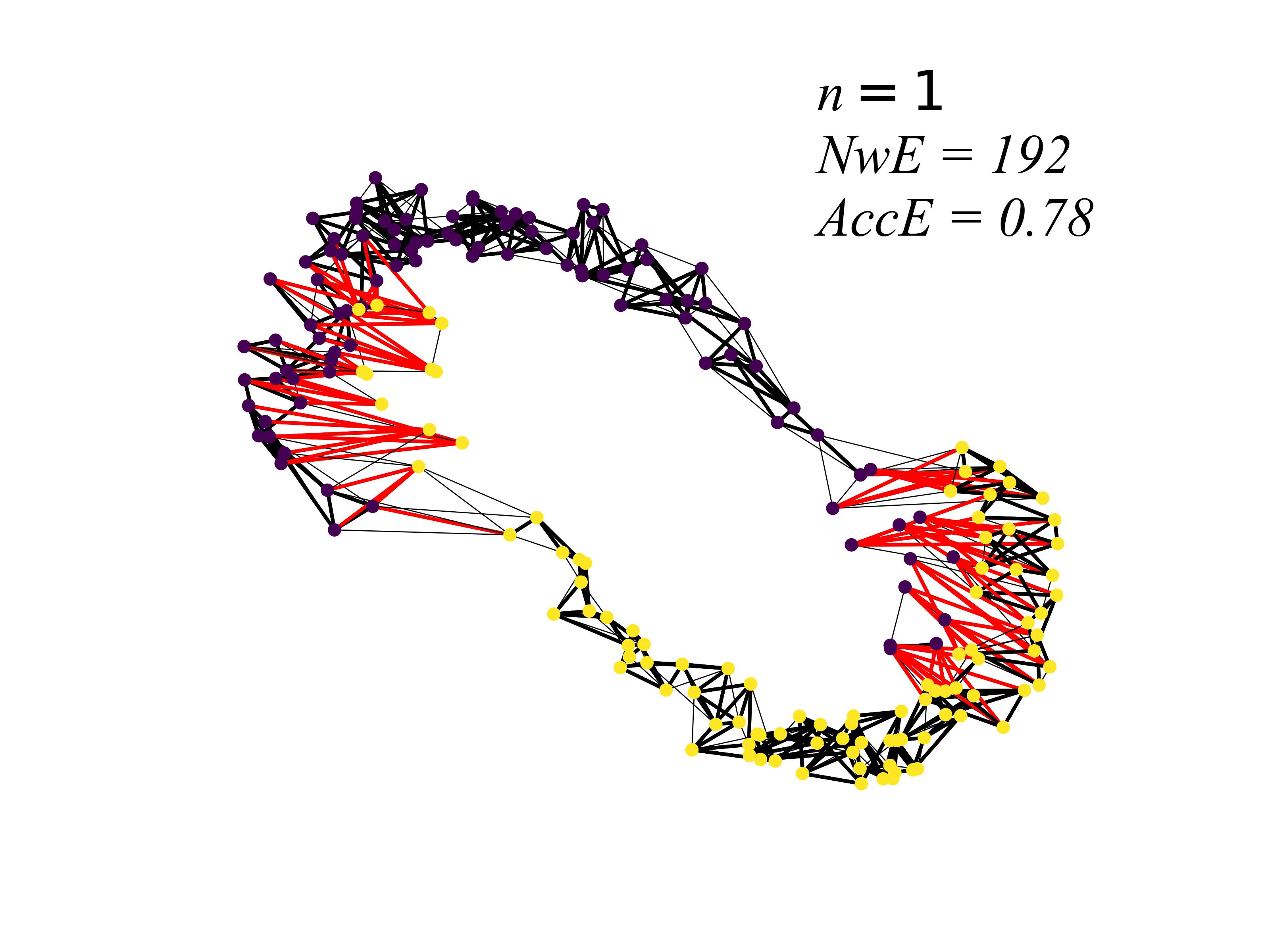}
        	}
        	\subfloat[]{
        		\includegraphics[width=0.31\textwidth]{./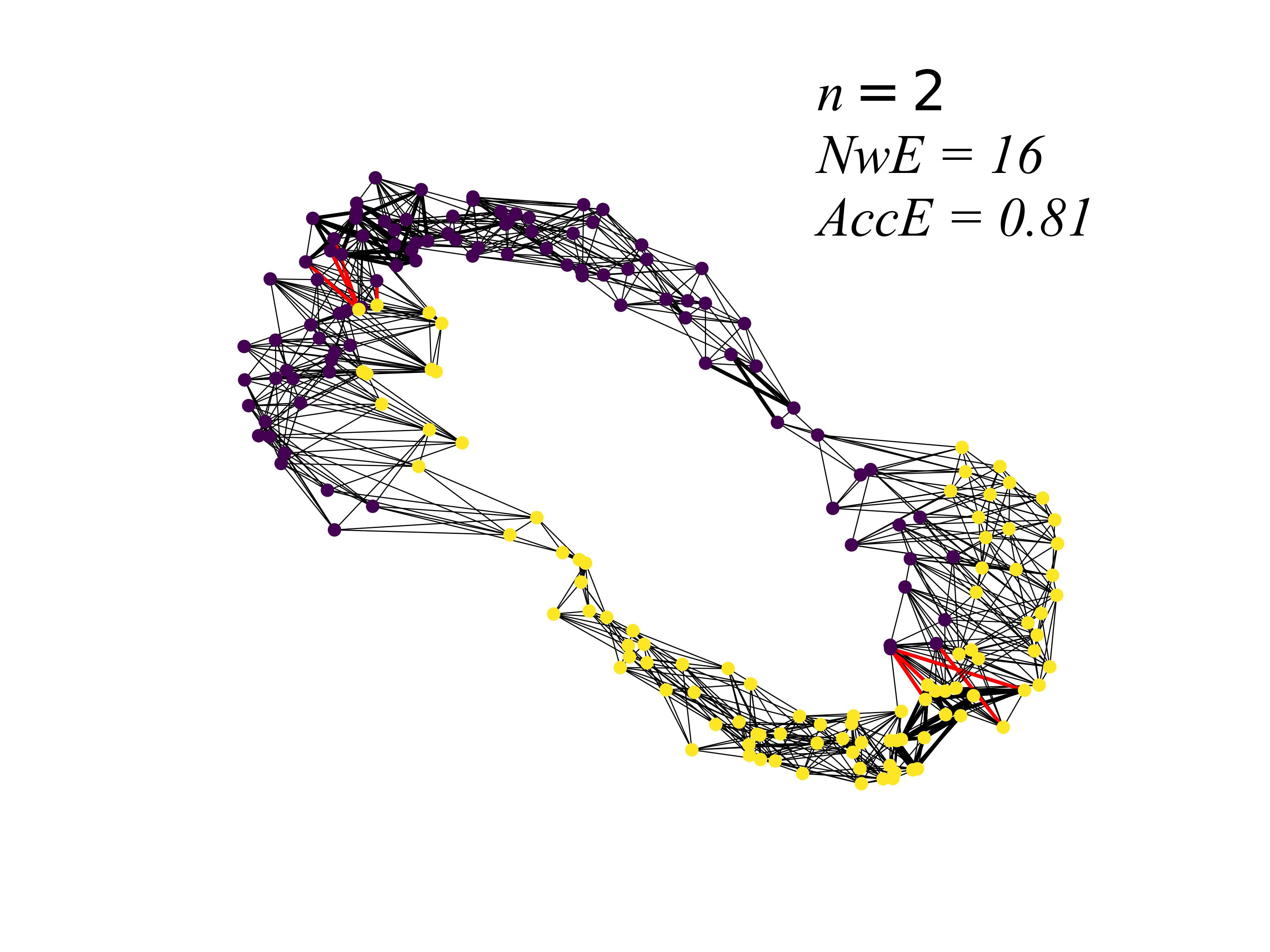}
        	}
        		\subfloat[]{
        		\includegraphics[width=0.31\textwidth]{./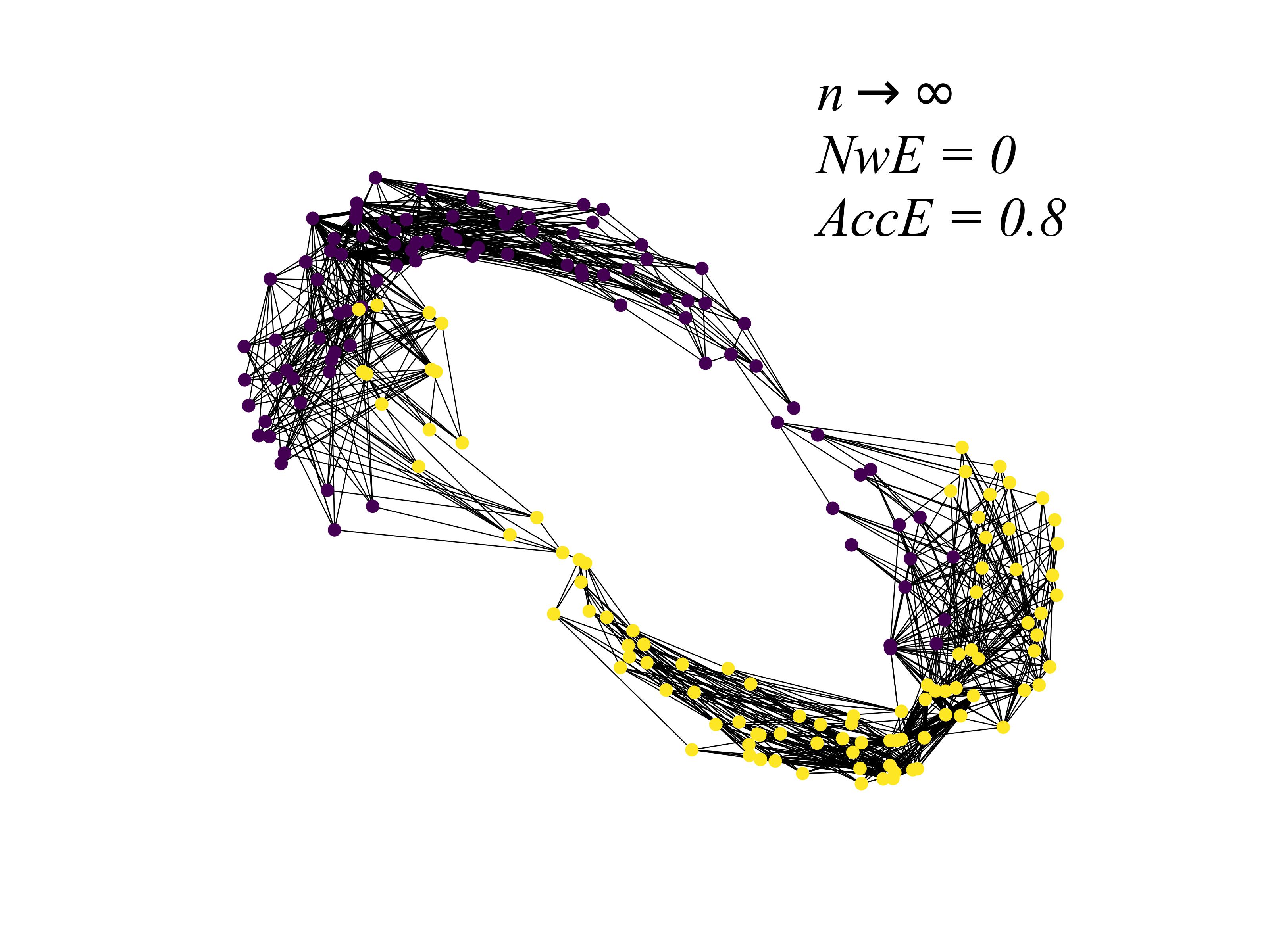}
        	}
        	\caption{The KNN graphs of two moons data. As observed, there are many wrong connections in the 1st-order graph and no errors in the infinity-order graph.}
        	\label{twoMoons}
        \end{figure*}
        
	  \indent Recently, many practical applications involve multi-view data, which provide complementary information to help improve the task performance. Different from aforementioned methods,  \cite{wmsf11} conducts multi-view clustering through self-weighted high-order similarity fusion and utilizes a cross-view strategy to get adaptive high-order similarity. 
 Nevertheless, the obtained similarity is not the intrinsic high-order information from data’s attributes. \\
	 \indent To tackle above problems, we propose High-order Multi-view Clustering (HMvC), which utilizes the powers of similarity graph to capture high-order interactions among samples. The similarity graph reflects the neighbor interactions and is derived from the attribute of data. HMvC smooths attribute through graph filtering technique and learns a consensus graph from multiple views via an adaptive graph fusion mechanism. More importantly, HMvC is directly applicable to generic data with attributed features regardless of whether they contain graph or not. We summarize our contributions as follows:\\
	
	\begin{itemize}
		\item We propose a generic  multi-view clustering method HMvC, which is suitable for both attributed graph data and non-graph data. Particularly, HMvC learns clustering-favorable representations through graph filtering and handles multi-view data through its adaptive graph fusion strategy.
		
		\item To capture high-order information, we use the powers of similarity matrices to  characterize high-order interactions. Theoretically, HMvC can obtain arbitrary-order similarity, even the infinity-order that explores all propagation paths passing similarity between samples.
		
    	\item The proposed method achieves promising performance on both multi-view non-graph and attributed graph datasets. Our simple method even outperforms many deep learning approaches.
    
    \end{itemize}

    	\section{Related Work} 
				\label{related work}
            	\subsection{Multi-view Clustering}
            	\label{mvcm}
                \indent Generally, multi-view clustering (MVC) methods focus on the global
            	consensus and complementary information carried by multiple views to improve the
            	performance \cite{huangdong, chaoguoqing}. Traditional MVC methods mainly focus on non-graph data and obtain the clustering results based on attributed features. \cite{wenjie, liuyong} focus on handle incomplete multi-view clustering via learned common representations with semantic consistency from different views. And graph-based methods, like \cite{zhankun,wangrong, wanghao}, aim to learn a consensus graph which used to obtain the clustering results from multiple attributes of data. 
            	 \cite{nie2016parameter} is a parameter-free auto-weighted multiple graph learning framework. Subspace-based methods find clusters via mapping the high-dimensional data points to several low-dimensional linear subspaces \cite{wangchandong}.  \cite{brbic2018multi} learns a joint subspace representation across all views and performs spectral clustering on the learned representation.
            	 Moreover, \cite{zhangzheng, chenyongyong} propose mul-ti-view spectral clustering based on adaptive learning mechanism.\\
            	 \indent Attributed graph data contain both attributes and topological structures, and plentiful attributed graph MVC methods have been proposed to cluster graph nodes.  \cite{mvsc26} focuses on large-scale multi-view data and bipartite
            	graph is imposed to approximate the similarity graph.  \cite{zhang2018scalable} proposes a multi-view network embedding method that learns multiple relations by a unified network embedding framework.
            	 \cite{xia2014robust} is a robust multi-view spectral clustering method and uses a shared low-rank transition probability matrix derived from each single view as input to standard Markov chain method for clustering.
            	 \cite{nie2017self} develops two methods that learn a shared graph from multiple graphs by using two weighting strategies.  \cite{fan2020one2multi} proposes multi-view graph clustering method based on graph auto-encoder and takes into account the informative view selected by modularity.  \cite{lin2021graph} is an attributed graph clustering method that exploits graph filtering and high-order neighborhood information.  \cite{chen2020graph} is a GCN-based method and mainly solves graph structured data with multi-view attributes.  \cite{lin2021completer} and  \cite{hassani2020contrastive} develop two methods based on contrastive learning: COMPLETER and MVGRL. COMPLETER performs data recovery and consistency learning of incomplete multi-view data simultaneously. MVGRL performs contrastive learning between two diffusion matrices transformed from the adjacency matrix to obtain clustering-favorable representation.\\
            	\indent We can see that different methods are developed separately to tackle graph and feature data. In fact, feature and graph can complement each other. Therefore, we develop a generic framework, which can not only be applied on non-graph but also on attributed graph data through graph filtering. More importantly, our method can explore the interaction between feature and graph.
            
            	\subsection{High-order Information Exploration}
            	\label{hoie}
            	\indent Many techniques have been developed for neighborhood information exploration ranging from the first-order to high-order relationships. For instance,  \cite{mve14} only preserves the first-order information while  \cite{line1,sdne15} exploit the second-order relationships.  \cite{grarep16} manipulates global transition matrices and combines various representations learned from multiple models, which helps to obtain a high-order information preserved graph representation.  \cite{neu17} proposes network embedding update method, in which the powered adjacency matrices are treated as high-order proximity matrices, and learns embeddings from the approximation of high-order proximity matrices. By applying the high-order Cheeger’s inequality ,  \cite{prone18} develops a scalable network embedding approach, which makes the obtained embeddings capture the high-order structural information. To take advantage of topological relationships in graph,  \cite{arope20} extracts arbitrary-order proximities with the help of eigen-decomposition reweighting theorem.  \cite{mnmf21} incorporates the community and high-order structures in an attempt to preserve as much structural information as possible. \\
            	 \indent However, above methods heavily depend on the adjacency matrix, which limits them to handle non-graph data. Recently,  \cite{hdmi22} proposes a novel framework to obtain the node embeddings containing abundant extrinsic and intrinsic information through optimization of a joint supervision signal designed by high-order mutual information, and it gets rid of dependence on the adjacency relationships. Differently, our method focuses on the construction of high-order similarity based on attributes, which reflects the intrinsic relationships in feature space. More importantly, our method explores the finite-order as well as the infinity-order information.

	\section{Methodology}
	\subsection{Notations}
	\indent Consider the non-graph data with $V$ views and $N$ data points, $X=\left\{X^1,\ldots,X^V\right\}$ are the set of feature matrices. Each $X^v={\left[x_{1}^{v}, x_{2}^{v}, \cdots, x_{N}^{v}\right]}^\top \in \mathbb{R}^{N \times d}$, where $d$ is the dimension of feature. Define the attributed graph data as $G=\left\{\mathcal{V}, E_{1}, \ldots, E_{V}, X\right\}$, where $\mathcal{V}$ is the set of $N$ nodes and $\left\{E_v\right\}^{V}_{v=1}$ represent different types of relationships, which construct various adjacency matrices $\left\{\widetilde{A}^v\right\}^{V}_{v=1} \in \mathbb{R}^{N \times N}$.
	\subsection{Graph Filtering}
	\indent A natural signal should be smooth on nearby nodes in term of the underlying graph. Graph filtering is employed to filter out undesirable high-frequency noise while retaining the graph geometric feature in graph signal processing  \cite{egilmez2018graph}. To yield clean and clustering-favorable representation, we smooth features of raw data via graph filtering technique. For the $v$-th view of data, we first add self-loop of each node on each adjacency graph, i.e., $\widetilde{A}^{v} + I$.
	The normalized adjacency matrix $A^{v}$ is computed by $A^{v}={\left({\widetilde{D}}^v\right)}^{-\frac{1}{2}}\left({\widetilde{A}}^{v}\right){\left({\widetilde{D}}^v\right)}^{-\frac{1}{2}}$, where $\widetilde{D}^v$ is corresponding degree matrix. The corresponding normalized graph laplacian $\mathrm{L}^{v}={\left({\widetilde{D}}^v\right)}^{-\frac{1}{2}}\left({\widetilde{D}}^{v}-{\widetilde{A}}^{v}\right){\left(\widetilde{D}^v\right)}^{-\frac{1}{2}}=I-A^{v}$, which can be eigen-decomposed as $L^{v}=\widetilde{U}^v \widetilde{\Lambda}^v {{\widetilde{U}}^v}{}^{\top}$, where $\widetilde{\Lambda}^v=\operatorname{diag}[\widetilde{\lambda}^v_{1},\widetilde{\lambda}^v_{2},\ldots,\widetilde{\lambda}^v_{N}]$ are the eigenvalues of $L^v$
	corresponding to eigenvectors $\widetilde{U}^v=[\widetilde{u}^v_1,\widetilde{u}^v_1,\ldots,\widetilde{u}^v_N]$. The eigenvalues are interpreted as frequencies in analogy with classical Fourier analysis. We utilize the first-order similarity introduced in \textit{section \ref{3.c}} as graph for non-graph data. \\
	 Then the smoothness $\Omega(\widetilde{u}^v_i)$ of eigenvector $\widetilde{u}^v_i$ can be measured as  \cite{cui2020adaptive}
	\begin{equation}
		\Omega(\widetilde{u}^v_i)=\frac{{\widetilde{u}^{v\top}}_{i} {L^v} {\widetilde{u}^v}_{i}}{{\widetilde{u}^{v\top}}_{i} {\widetilde{u}^v}_{i}}=\widetilde{\lambda}^v_{i}.
	\end{equation}
	 Eq. (2) indicates that a smooth graph signal contains more low-frequency (small eigenvalues) basis signals than high-frequency ones.
	  $X^v$ can be decomposed into a combination of basis signals, i.e., $X^v=\widetilde{U}^v c=\sum_{i=1}^{N}  \widetilde{u}^v_{i}c_{i}$, where $c_{i}$ is the coefficient. Moreover, the spectral response $f(\lambda)$ of a loss-pass filter $F$ can be defined as $f:\lambda \mapsto f(\lambda)=1-\tau{\lambda}$. The top eigenvalue of $L^v$ is $2$ and  \cite{cui2020adaptive} suggests that $\tau=\frac{1}{\lambda_{\operatorname{max}}}$ is the optimal choice. Then the filtered signal $H^v$ is defined as  
	\begin{equation}
		\begin{aligned}
			{H}^v&={\widetilde{U}^v F {\widetilde{U}^{v\top}}} {X^v}=\widetilde{U}^v F {\widetilde{U}^{v\top}}\widetilde{U}^v c\\&={\widetilde{U}^v}({I}-\frac{1}{2} \Lambda^v) {\widetilde{U}^{v\top}}{\widetilde{U}^v} {c}\\&=(I-\frac{1}{2}L^v)X^v
		\end{aligned}
	\end{equation}
    \indent This can only explore the first-order neighborhood information. In practice, we use the $k$-th order filtering to mine high-order relation, i.e., 
	\begin{equation}
		H^v= (I-\frac{1}{2}L^v)^kX^v.
		\label{filter1}
	\end{equation}
	
	\subsection{High-order Graph in Feature Space}\label{3.c}
    \indent Above approach is from the perspective of topological structure, here we further explore the similarity information in feature space. Similarity is a basic relationship among data points, and our model learns valuable information via making use of high-order relation. Given an attributed graph data $G$ or generic multi-view data X, we can obtain the first-order similarity of feature matrix. For the first-order similarity matrix of $v$-th view $\widetilde{W}_{v}^1 \in \mathbb{R}^{N \times N}$, its element is defined as 
	\begin{equation}
		{(\widetilde{W}_{v}^1})_{i j}=\frac{\cos <x_{i}^{v}, x_{j}^{v}>}{2}+\frac{1}{2},
	\end{equation}
	where $\cos <x_{i}^{v}, x_{j}^{v}>={x_{i}^{v}}^{\top} x_{j}^{v} /\left\|x_{i}^{v}\right\|\left\|x_{j}^{v}\right\|$ is cosine similarity between node $i$ and node $j$. If $i=j$, define $({\widetilde{W}_{v}^1})_{i j}=0$. Then normalized $W_v^1$ is defined as before. Let $ \Lambda^v = diag[\lambda_{1}^{v}$, $\lambda_{2}^{v}$, $\ldots$, $\lambda_{N}^{v}$] and $U^v = [u_{1}^{v}$, $u_{2}^{v}$, $\ldots$, $u_{N}^{v}]$ be the eigenvalues and corresponding eigenvectors of $ W_{v}^{1}$. The high-order neighborhood relationship between data points can be obtained from high-order similarity graph. We define the $n$th-order graph $W_{v}^{n}=W_{v}^{n-1} \times W_{v}^{1}$. 
	
	\newtheorem{lemma}{\textbf{Lemma}}
	\indent \\
	\begin{lemma}
		\textit{$W_{v}^n\space (n\geq1)$ is a Markov matrix, i.e., $({W_{v}^n})_{ij}\geq0$ and $\sum^{N}_{j=1}\left| ({W_{v}^n})_{ij}\right|=1$ for arbitrary $i$, $j$ in the $v$-th view.}
		\label{lemma1}
	\end{lemma}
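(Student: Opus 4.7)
The plan is to prove the claim by a straightforward induction on $n$. For the base case $n=1$, the unnormalized entries $(\widetilde{W}_{v}^1)_{ij}=\tfrac{1}{2}\cos\langle x_i^v,x_j^v\rangle+\tfrac{1}{2}$ lie in $[0,1]$ (with the diagonal set to $0$), so $\widetilde{W}_{v}^{1}$ is entrywise non-negative. Granted that the normalization used to obtain $W_{v}^{1}$ produces unit row sums, the base case is immediate: $W_{v}^{1}$ has non-negative entries and $\sum_{j}(W_{v}^{1})_{ij}=1$.

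For the inductive step, assume $W_{v}^{n-1}$ satisfies the conclusion of the lemma and write $W_{v}^{n}=W_{v}^{n-1}W_{v}^{1}$ entrywise as $(W_{v}^{n})_{ij}=\sum_{k}(W_{v}^{n-1})_{ik}(W_{v}^{1})_{kj}$. Non-negativity of $(W_{v}^{n})_{ij}$ follows at once since it is a sum of products of non-negative numbers. For the row-sum condition, I would interchange the two summations and apply the inductive hypothesis together with the base case,
\begin{equation*}
\sum_{j=1}^{N}(W_{v}^{n})_{ij}=\sum_{k=1}^{N}(W_{v}^{n-1})_{ik}\sum_{j=1}^{N}(W_{v}^{1})_{kj}=\sum_{k=1}^{N}(W_{v}^{n-1})_{ik}=1.
\end{equation*}
Because all entries are non-negative, $|(W_{v}^{n})_{ij}|=(W_{v}^{n})_{ij}$, so the absolute value in the statement is redundant and the inductive step is complete.

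The main obstacle lies entirely in the base case. The text only says that $W_{v}^{1}$ is obtained by normalizing $\widetilde{W}_{v}^{1}$ "as before," which refers to the symmetric normalization $(\widetilde{D}^{v})^{-\frac{1}{2}}\widetilde{W}(\widetilde{D}^{v})^{-\frac{1}{2}}$ introduced for the adjacency matrix; however, that normalization does not generally yield unit row sums and hence does not give a Markov matrix. For the lemma to be valid as stated, the normalization must be interpreted as the random-walk (row) normalization $(\widetilde{D}^{v})^{-1}\widetilde{W}_{v}^{1}$, and I would make this interpretation explicit at the start of the proof before invoking it in the induction. Once that reading is fixed, the rest of the argument is a one-line swap of sums.
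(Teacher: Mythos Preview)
Your proof is correct and takes essentially the same approach as the paper, which computes $\sum_j (W_v^2)_{ij}=1$ via the same swap of sums you use in the inductive step and then asserts the case $n\ge 3$ follows in the same way. Your caveat about the normalization is in fact more careful than the paper itself, which simply writes ``Obviously, $W_v^1$ is a Markov matrix'' without addressing the symmetric-versus-row-stochastic issue you flag.
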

    \newtheorem{proof}{\textbf{Proof}}
    \begin{proof}
        	\textit{Obviously, $W_v^1$ is a Markov matrix. $W_v^2=W_v^1 \times W_v^1$ and the sum of arbitrary column $({W_{v}^{2}})_{i}$ can be written as
                		$$
                		\begin{aligned}
                			\sum_{j=1}^{N} ({W_{v}^{2}})_{ i j} &=\sum_{j=1}^{N} \sum_{l=1}^{N}     ({W^{1}_{v}})_{ i l} ({W^{1}_{v}})_{ l j} \\
                			&=\sum_{l=1}^{N}\left[({W^{1}_{v}})_{ i l} \sum_{j=1}^{N} ({W^{1}_{v}})_{ l j}\right] \\
                			&=\sum_{l=1}^{N} ({W^{1}_{v}})_{ il}\\&=1.
                		\end{aligned}
                		$$
                    \indent  This means that powered $W_v^1$, like $W_v^2$, is Markov matrix too. Because $W_v^n$ is the product of $n$ $W_v^1$s, we obtain the same conclusion for $n>=3$. }
    \end{proof}
	\indent \\
	\begin{lemma}
		\textit{ For arbitrary eigenvalue $\lambda_{i}^v$  of $ W_{v}^{n}$, $\left|\lambda_{i}^v\right| \leq 1$, and at least one of all eigenvalues is equal to one.}
		\label{lemma2}
	\end{lemma}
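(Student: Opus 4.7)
The plan is to leverage Lemma~\ref{lemma1} directly: since $W_v^n$ is a Markov matrix with non-negative entries and all row sums equal to one, both claims follow from standard stochastic-matrix arguments applied to $W_v^n$ itself (rather than to $W_v^1$).

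First, I would show that $1$ is always an eigenvalue by exhibiting an explicit eigenvector. Taking the all-ones vector $\mathbf{1} \in \mathbb{R}^N$, the $i$-th entry of $W_v^n \mathbf{1}$ equals $\sum_{j=1}^{N}(W_v^n)_{ij}$, which by Lemma~\ref{lemma1} is exactly $1$. Hence $W_v^n \mathbf{1} = \mathbf{1}$, so $\lambda = 1$ lies in the spectrum of $W_v^n$.

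Second, I would bound the modulus of any eigenvalue via the standard $\ell_\infty$-argument. Suppose $W_v^n x = \lambda x$ for some nonzero $x \in \mathbb{C}^N$, and let $i$ be an index attaining $|x_i| = \max_k |x_k|$. Using non-negativity of the entries of $W_v^n$ (Lemma~\ref{lemma1}), the triangle inequality gives
\begin{equation*}
|\lambda|\,|x_i| = \Bigl|\sum_{j=1}^{N}(W_v^n)_{ij} x_j\Bigr| \leq \sum_{j=1}^{N}(W_v^n)_{ij}\,|x_j| \leq |x_i|\sum_{j=1}^{N}(W_v^n)_{ij} = |x_i|,
\end{equation*}
where the last equality again uses Lemma~\ref{lemma1}. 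Dividing by $|x_i|>0$ yields $|\lambda| \leq 1$. (Equivalently, one can cite Gershgorin's disc theorem applied to the row-stochastic matrix $W_v^n$.)

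I do not anticipate a serious obstacle; the proof is essentially a direct corollary of Lemma~\ref{lemma1} combined with a textbook argument for stochastic matrices. The only subtle point worth flagging is notational consistency: the proof of Lemma~\ref{lemma1} speaks of ``column sums'' while the displayed computation is actually the row sum $\sum_{j}(W_v^n)_{ij}=1$, so I would phrase the argument in terms of row-stochasticity to match the formula that was actually established, and then apply it to the maximum-modulus coordinate as above.
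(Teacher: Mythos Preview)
Your proof is correct and essentially parallel to the paper's, though the two halves use slightly different standard devices. For the bound $|\lambda|\le 1$, the paper invokes Gershgorin's circle theorem applied to the row-stochastic matrix $W_v^n$ (which you mention as an alternative), whereas you give the equivalent direct $\ell_\infty$ argument via the maximum-modulus coordinate; these are interchangeable. For the existence of the eigenvalue $1$, the paper argues indirectly by setting $C^v=W_v^n-I$, observing that each column of $C^v$ sums to zero so that the rows are linearly dependent and $\det(C^v)=0$; your approach of exhibiting the eigenvector $\mathbf{1}$ explicitly via $W_v^n\mathbf{1}=\mathbf{1}$ is more direct and arguably cleaner, since it immediately identifies the eigenvector that Lemma~\ref{stable_graph} later relies on. Your remark about the row/column wording in Lemma~\ref{lemma1} is apt: the displayed identity there is indeed a row sum, and since $W_v^1$ is symmetric the distinction is immaterial, but stating it as row-stochasticity keeps the logic transparent.
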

	\begin{proof}
	    \begin{enumerate}
    			\item \textit{According to the Gershgorin circle theorem, we have the Gershgorin circle $
    				S_{i}=\left\{z \in \mathbb{R}:\left|z- ({W_{v}^{n} })_{i i}\right| \leq \sum_{j \neq i}\left|({W_{v}^{n}})_{ij}\right|\right\} $ and 
    				$\lambda_{i} \in S=\bigcup_{j=1}^{N} S_{j} \quad(i=1,2, \cdots, \mathrm{N})$. Because $ W_{v}^{n}$ is a  Markov matrix and $ W_{v}^{1}={ W_{v}^{1}}^{\top}$, then $S_{i}=\left\{z \in \mathbb{R}:|z| \leq \sum_{j \neq i}\left|({ W_{v}^{n}})_{i j}\right|=1\right\}$. Thus, $\left|\lambda_{i}\right| \leq 1$.}\\
    			\item \textit{Define $C^v =  W_{v}^{n} - I$, it's clear that $\forall i \in\{1,2, \cdots, \mathrm{N}\},\operatorname{sum}\left(\mathrm{C}_{i}^{v}\right)=\sum_{j=1}^N c_{ij}=0$, where ${C^v_i}$ is the $i$th column of $C^v$. For each row $c_i^v$ of $C^v$, we have $$
                    \begin{aligned}
                    &\left(\begin{array}{llll}
                    1 & 1 & \cdots & 1
                    \end{array}\right)\left(\begin{array}{c}
                    c_{1}^{v} \\
                    c_{2}^{v} \\
                    \vdots \\
                    c_{N}^{v}
                    \end{array}\right) =c_{1}^{v}+c_{2}^{v}+\cdots+c_{N}^{v} \\
                    &=\left(\operatorname{sum}\left(C_{2}^{v}\right), \operatorname{sum}\left(C_{2}^{v}\right), \cdots, \operatorname{sum}\left(C_{2}^{v}\right)\right)=\textbf{0}
                    \end{aligned}
                    $$ This indicates that rows of $C^v$ are linearly dependent, so $\operatorname{det}\left(\mathrm{C}^{v}\right)=0$ and $1$ is one eigenvalue of $ W_{v}^{n}$.}
    		\end{enumerate}
	\end{proof}
	\indent \\
	\begin{lemma}
		\textit{When $n \rightarrow \infty$, $W_{v}^{n}$ converges to a stable matrix and the stable matrix is represented as $\sum_{j=1}^{r} u_{j}^{v} {u_{j}^{v}}^\top$, where $r$ is the number of eigenvalues $\lambda^{v}=1$. }
		\label{stable_graph}
	\end{lemma}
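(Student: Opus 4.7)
The plan is to exploit the symmetry of $W_v^1$ and diagonalize it once, then push the power inside the spectral decomposition so that convergence of $W_v^n$ reduces to convergence of the scalar sequences $(\lambda_i^v)^n$.

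First I would observe that $W_v^1$ is real symmetric (since the cosine-based similarity is symmetric and the normalization $(\widetilde{D}^v)^{-1/2}\widetilde{W}_v^1(\widetilde{D}^v)^{-1/2}$ preserves symmetry), so by the spectral theorem there is an orthonormal basis of eigenvectors $\{u_i^v\}_{i=1}^N$ with real eigenvalues $\lambda_i^v$, giving
\begin{equation*}
W_v^1 \;=\; \sum_{i=1}^{N} \lambda_i^v\, u_i^v {u_i^v}^{\top}.
\end{equation*}
Using orthonormality (${u_i^v}^{\top} u_j^v = \delta_{ij}$), a straightforward induction on the definition $W_v^n = W_v^{n-1}\times W_v^1$ gives
\begin{equation*}
W_v^n \;=\; \sum_{i=1}^{N} (\lambda_i^v)^n\, u_i^v {u_i^v}^{\top}.
\end{equation*}

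Next, I would split the index set according to Lemma \ref{lemma2}. Let $\mathcal{I}_1=\{i:\lambda_i^v=1\}$, with $|\mathcal{I}_1|=r\geq 1$, and let $\mathcal{I}_{<}$ collect the remaining indices, for which $|\lambda_i^v|<1$ (after reordering so that $\mathcal{I}_1=\{1,\dots,r\}$). On $\mathcal{I}_1$ the scalar powers stay equal to $1$, contributing exactly the claimed limit $\sum_{j=1}^{r} u_j^v {u_j^v}^{\top}$; on $\mathcal{I}_{<}$ we have $(\lambda_i^v)^n \to 0$, and since the rank-one terms $u_i^v {u_i^v}^{\top}$ have unit operator norm, the tail $\sum_{i\in\mathcal{I}_{<}}(\lambda_i^v)^n u_i^v{u_i^v}^{\top}$ vanishes in norm as $n\to\infty$. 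Combining the two pieces yields the desired limit.

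The main obstacle is ensuring the strict inequality $|\lambda_i^v|<1$ for $i\notin\mathcal{I}_1$: Lemma \ref{lemma2} only gives $|\lambda_i^v|\leq 1$, so a priori an eigenvalue $-1$ could prevent convergence and force the sequence to oscillate. I would dispose of this by noting that the shifted cosine $({\widetilde W}_v^1)_{ij}=\tfrac12\cos\langle x_i^v,x_j^v\rangle+\tfrac12\in[0,1]$ makes $\widetilde W_v^1$ entrywise nonnegative and (generically) irreducible with strictly positive off-diagonal entries, so the underlying similarity graph is non-bipartite and the normalized symmetric stochastic matrix $W_v^1$ is primitive; by Perron–Frobenius $-1$ is not an eigenvalue. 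This upgrades the bound from $|\lambda_i^v|\leq 1$ to $\lambda_i^v=1$ or $|\lambda_i^v|<1$, which is exactly what the convergence argument needs.
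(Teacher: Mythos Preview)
Your approach is essentially the same as the paper's: diagonalize the symmetric matrix $W_v^1$ via the spectral theorem, write $W_v^n = U^v(\Lambda^v)^n{U^v}^\top$, and let $n\to\infty$ so that only the eigenvalue-$1$ contributions survive. In fact you go further than the paper, which simply asserts the limit without discussing the eigenvalue $-1$; your Perron--Frobenius argument ruling out $-1$ (via the strictly positive off-diagonal of the shifted cosine similarity) closes a gap that the paper's own proof leaves open.
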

	\begin{proof}
	    \textit{$ W_{v}^{1}$ is a real symmetric matrix, then $ W_{v}^{n}$ can be computed easily:$$
    			\small
    			\begin{aligned}
    				&W_{v}^{n}=\left( W_{v}^{1}\right)^{n}=U^v {(\Lambda^v)}^{n} {U^v}^{\top}= \\
    				&\left(\begin{array}{ccc}
    					u_{1}^{v} & \cdots & u_{N}^{v}
    				\end{array}\right)\left(\begin{array}{ccc}
    					\left(\lambda_{1}^{v}\right)^{n} & \cdots & 0 \\
    					\vdots & \ddots & \vdots \\
    					0 & \cdots & \left(\lambda_{N}^{v}\right)^{n}
    				\end{array}\right)\left(\begin{array}{c}
    					u_{1}^{v \top} \\
    					\vdots \\
    					u_{N}^{v \top}
    				\end{array}\right)
    			\end{aligned}.$$ Suppose that there are $r$ eigenvalues equal to 1, then $W_{v}^{n} = \sum_{j=1}^{r} u_{j}^{v} {u_{j}^{v}}^\top$ when $n \rightarrow \infty$.}
	\end{proof}
	\indent \\
	 \indent \textit{Lemma \ref{lemma2} and \ref{stable_graph}} indicates that there exists a convergent non-zero similarity graph when $n \rightarrow \infty$ and we obtain the infinity-order graph as $\lim\limits_{n \rightarrow \infty} W_{v}^{n}= \sum_{j=1}^{r} u_{j}^{v} {u_{j}^{v}}^\top$.
	To explore the information of high-order neighborhood relationship, we define the mixed similarity graph as 
	\begin{equation}
		f_v(W)= W_{v}^{1}+W_{v}^{2}+\cdots+W_{v}^{n}.
		\label{high1}
	\end{equation} 
	\indent Specifically, the first-order proximity is the most intuitionistic relationship among nodes and the infinity-order interactions are stable relationships after moving with infinite steps. We hypothesize that the first-order and infinity-order similarity graph contain most information, which is later verified by our ablation study. Then we define the mixed similarity graph with the least cost when $n \rightarrow \infty$ as 
	\begin{equation}
		f_v(W)= W_{v}^{1}+ \sum_{j=1}^{r} u_{j}^{v} {u_{j}^{v}}^\top.
		\label{high2}
	\end{equation}
	\subsection{Graph Learning}
	\indent Since the raw graph is often sparse and noisy, which is not optimal for downstream tasks, we choose to learn a refined graph from data. For multi-view data, a graph can be learned for each view based on self-expression property of data, i.e., each data point can be represented by a linear combination of other samples. The objective function on single-view data can be mathematically formulated as
	\begin{equation}
		\min _{S^v}\left\|{H^v}^{\top}-{H^v}^{\top} S^v\right\|_{F}^{2}+\alpha \|S^v\|_F^2,
		\label{single}
	\end{equation}
	where $\alpha>0$ is a trade-off parameter and the Frobenius norm is used as a regularizer for simplicity. To incorporate high-order neighborhood relationship, we require $S^v$ be close to the mixed similarity graph. Though some common information is shared by all views, there exists some complementary factors, such as geometry and semantics, which are distinct in different views. To obtain a consistent result, we propose a graph fusion framework, which assigns weights to various views automatically. Eventually, our proposed HMvC model can be written as 
	\begin{equation}
		\begin{aligned}
		\centering
			\min _{S^{v}, \gamma^{v}, S} &\sum_{v=1}^{V} \gamma^{v} \left(\left\|H^{v \top}-H^{v \top} S^{v}\right\|_{F}^{2} +\alpha\|S^v-f_v(W)\|_{F}^{2}\right)\\ &+\beta\left\|S-\sum_{v=1}^{V} \gamma^{v} S^{v}\right\|_{F}^{2}+\mu\|S\|_{F}^{2}, \\ 
			&\text { s.t. } \sum_{v=1}^{V} \gamma^{v}=1, \quad 0 \leq \gamma^{v},
			\centering
		\end{aligned}
		\label{obj}
	\end{equation}
	where the third term is the graph fusion term that aims to obtain a unified graph $S$ by assuming it is a linear combination of $S^v$ from each view, $\gamma^{v}$ is the weight of $v$-th view. If the value in the parentheses of first term is big, the corresponding $\gamma^v$ would be small, which contributes little to final $S$. It can be seen that there are three key components in HMvC: smoothing features via graph filtering, preserving high-order similarity information, and learning a unified graph through graph fusion mechanism. Note that graph filtering explores the high-order relation in topology structure, while high-order graph mines the high-order information in feature space. They complement each other to some extent.
		\begin{equation}
			\begin{aligned}
				\min _{S^{v}, \gamma^{v}, U} &\sum_{v=1}^{V} \gamma^{v}\left(\left\|H^{v \top}-H^{v \top} S^{v}\right\|_{F}^{2}+\alpha\left\|S^{v}-f(W)\right\|_{F}^{2}\right) \\ &+\beta\left\|U-\sum_{v}^{V} \gamma^{v} S^{v}\right\|_{F}^{2}+\mu\|U\|_{F}^{2} \\
				&\text { s.t. } \quad \sum_{v}^{V} \gamma^{v}=1, \quad 0 \leq \gamma^{v},
			\end{aligned}
		\end{equation}
	
	\subsection{Optimization Procedure}
	\indent To solve problem (\ref{obj}), we adopt an alternating optimization strategy, in which we fix one variable at once and optimize the others alternatively.\\
	\textbf{1) Initialization of $S$}: Firstly, we initialize $S$ as the view average of solutions from Eq. (\ref{single}). \\
	\textbf{2) Fix $S$ and $\gamma^{v}$, update $S^v$}: When $S$ and $\gamma^{v}$ are fixed, we rewrite Eq. (\ref{obj}) for each view as follows
	\begin{equation}
		\begin{aligned}
			\min _{S^{v}} &\left\|{H^{v}}^{\top}-{H^{v}}^{\top} S^{v}\right\|_{F}^{2}+\alpha\left\|S^{v}-f_v(W)\right\|_{F}^{2}+\\& \beta\left\|\left(S-\sum_{i\neq v}^V \gamma^i S^i\right)-\gamma^{v} S^{v}\right\|_{F}^{2}.
		\end{aligned}
	\end{equation}
	By setting its derivation w.r.t $S^v$ to zero, we get
	\begin{equation}
		\begin{aligned}
			S^{v}= &\left(H^{v} {H^{v}}^{\top}+\left(\alpha+\beta \gamma^{v}\right) \mathrm{I}\right)^{-1}(H^{v}{H^{v}}^{\top}+\alpha f_v(\mathrm{~W})\\&+\beta (S-\sum_{i\neq v}^V \gamma^i S^i)).
		\end{aligned}
		\label{UP_S_v}
	\end{equation}
	\textbf{3) Fix $S^v$ and $\gamma^{v}$, update $S$}: When $S^v$ and $\gamma^{v}$ are fixed, Eq. (\ref{obj}) is simplified into
	\begin{equation}
		\min _{S}\beta\left\|S-\sum_{v=1}^{V} \gamma^{v} S^{v}\right\|_{F}^{2}+\mu\|S\|_{F}^{2}.
	\end{equation}
	Let its derivation w.r.t $S$ be zero, we have
	\begin{equation}
		S=\frac{\beta \sum_{v=1}^{V} \gamma^{v} S^{v}}{\beta+\mu}.
		\label{UP_S}
	\end{equation}
	
    \textbf{4) Fix $S$ and $S^{v}$, update $\gamma^v$}: Define ${\gamma}=\left(\gamma^{1}, \gamma^{2}, \cdots, \gamma^{V}\right)$, then our objective function is formulated as a quadratic programming (QP) problem
	
	\begin{equation}
		\begin{array}{cc}
			\underset{\gamma}{\operatorname{min}} \quad \frac{1}{2} {\gamma}^{\top} P {\gamma}+q^{\top} {\gamma}, \text { s.t.} \quad {\gamma}>\textbf{0}, \textbf{1} \cdot {\gamma}=1,
		\end{array}
		\label{qp}
	\end{equation}
	 
	where $P \in \mathbb{R}^{V \times V}$ and $P_{i j}=\operatorname{Tr}\left(S^{i} \times S^{j}\right)$, $q_{v}=M^{v}-2 * \beta T r\left(\mathrm{SS}^{v}\right)$ and $S^{v}=\left\|{H^{v}}^\top-{H^{v}}^\top S^{v}\right\|_{F}^{2}+\alpha\left\|S^{v}-f_v(W)\right\|_{F}^{2}$. As a standard QP problem, Eq. (\ref{qp}) can be solved efficiently. \\

	 \begin{algorithm}
	\label{Alg1}
	\begin{algorithmic}[1]
	\caption{HMvC}
		\label{al1}
		\Require 
		parameter $\alpha$, $\beta$ and $\mu$, adjacency matrix $ \widetilde{A}^{1}$,...,$\widetilde{A}^{V}$ ($W_{1}^{1}$,..., $W_{V}^{1}$ for non-graph data), feature $X^1$,...,$X^V$, the order of graph filtering $k$, the number of clusters $c$
		\Ensure $c$ clusters
		 \State Initialize $S$ as Eq. (\ref{single}) and set $\gamma^{v}=\frac{1}{V}$;
		 \State Graph filtering by Eq. (\ref{filter1}) for each view;
		 \State Compute mixed high-order similarity graph $f_v(W)$ via Eq. (\ref{high1}) or Eq. (\ref{high2});
		\While{convergence condition does not meet}:
		 \State Update $S^v$ in Eq. (\ref{UP_S_v}) for each view;
		 \State Update $S$ in Eq. (\ref{UP_S});
		 \State Update $\gamma^{v}$ via solving Eq. (\ref{qp});
		\EndWhile
		\State Clustering on $S$.
	\end{algorithmic}
    \end{algorithm} 
	

	 \indent The optimization procedure will monotonically decrease the objective function value in Eq. (\ref{obj}) in each iteration  \cite{bezdek2003convergence}. Since the objective function has a lower bound, such as zero, the above iteration converges. The complete steps of HMvC are outlined in Algorithm \ref{al1}. Afterwards, we perform classical clustering method on obtained $S$, like $k$-means and spectral clustering, etc., to achieve the final result.
	\subsection{Scalable Graph Learning for Large-scale Data}
	\indent Although Eq. (\ref{obj}) can construct a clustering-favorable graph, it could take a lot of memory and time because the size of learned graph is $N\times N$, and Eq. (\ref{UP_S_v}) involves cubic computation complexity. Therefore, scalable solution is desired for large-scale data, like matrix factorization \cite{fanjicong, defast}. One efficient approach is to select some representative samples as anchors to approximate raw data. Anchor-based strategy is mainly composed of two steps:  anchor selection and the construction of anchor graph. Two popular means of anchor selection are random sampling  \cite{wang2021spatial} and k-means  \cite{kang2020large, wang2016scalable, ou2020anchor}. There are two strategies for k-means based anchor selection.  \cite{kang2020large} regards the centroid of each cluster as the landmark representation while  \cite{wang2016scalable, ou2020anchor} choose the data points close to centroids as anchors. Differently,  \cite{li2020multi} selects anchors via a weight mechanism on the raw features of data.\\
	 \indent However, all aforementioned anchor selection methods focus on attributed features only and neglect the topological information. Similar to  \cite{lin2021graph}, we choose anchors via the node importance in topological structure. Specifically, the set of anchor nodes $Y$ is chosen  from the top $m$ nodes with large degree. Different from  \cite{lin2021graph}, we consider the mixed second-order adjacency matrix $\widehat{A^{v}}=\widetilde{A^{v}}+\widetilde{A^{v}}^{2}$. The multi-view degree matrix is computed as $\widehat{D}=\sum_{v=1}^V \widehat{D}^v$, where $\widehat{D}^v$ is the mixed degree matrix of $v$-th view.
	Then the absolute importance of node $i$ is $g_{i}=\frac{\left(\widehat{D}_{i i}\right)^{\eta}}{\sum_{j \in \mathcal{V}-Y}\left(\widehat{D}_{j j}\right)^{\eta}}$, where $\eta>1$ is a sharpening parameter. We repeatedly select anchors via the relative importance of node
	\begin{equation}
		p_{i}=\frac{g_{i}}{\sum_{j \in \mathcal{V}-Y} g_{j}}.
		\label{anchor_get}
	\end{equation}
	\indent At the beginning, $Y=\emptyset$. Then we iteratively pick node $i$ with the largest $p$ in $\mathcal{V}-Y$. After each iteration, $Y=Y\cup \left\{i\right\}$. The indexes of selected anchors are represented as $Inds =\left[S_{1}, \mathrm{~s}_{2}, \cdots, S_{m-1}, \mathrm{~s}_{m}\right]$, then the filtered anchor representation $\widetilde{H}^v $ are picked from ${H^v}$ by index $Inds$. The anchor-based  mixed high-order graph $\widetilde{f_v(W)}$ is constructed by Eq. (\ref{high1}) and \textit{Lemma \ref{anchor_lemma}} efficiently.
	\indent \\
	\begin{lemma}
		Divide the $n$th-order similarity graph of anchors $W_{v}^{n}$ into two parts:
		$W_{v}^{n}=\left(\begin{array}{c}(W_{v}^{n})^1\\ (W_{v}^{n})^2\end{array}\right)$, $(W_{v}^{n})^1=\left(\begin{array}{ll}(W_{v}^{n-1})^1 {(W_{v}^{1})^1}^{\top} & (W_{v}^{n-1})^1  {(W_{v}^{1})^2}^{\top}\end{array}\right)$ and $(W_{v}^{n})^2$ is the remaining part, where $ {(W_{v}^{1})^1}=W_{v}^{\mathrm{1}}[$ Inds $]$ and  $ {(W_{v}^{1})^2}$ is defined similarly.
		\label{anchor_lemma}
	\end{lemma}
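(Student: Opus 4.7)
The plan is to prove the identity by a direct block-partition argument based on the symmetry of $W_v^1$ and the recursive definition $W_v^n = W_v^{n-1} W_v^1$. First I would reorder rows (equivalently, relabel indices) so that the $m$ anchor indices in $Inds$ come first; this is a permutation similarity and does not change the validity of the identity. Under this convention the row partition of any matrix of size $N\times N$ into its anchor rows and non-anchor rows becomes the natural top/bottom block split used in the lemma statement.

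Next I would record the two structural facts I need. (i) By construction $(W_v^{n-1})^1$ is the anchor-row block of $W_v^{n-1}$, so applying the recursion $W_v^n = W_v^{n-1}W_v^1$ and reading off the anchor rows gives $(W_v^n)^1 = (W_v^{n-1})^1\, W_v^1$. (ii) Since each entry of $W_v^1$ is a symmetrized cosine similarity, $W_v^1$ is symmetric, so its anchor-column block equals the transpose of its anchor-row block, namely $(W_v^1)^{1\top}$, and similarly its non-anchor-column block equals $(W_v^1)^{2\top}$. Hence $W_v^1$ admits the column partition $W_v^1 = \bigl[(W_v^1)^{1\top}\ \ (W_v^1)^{2\top}\bigr]$.

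The identity then follows by one application of block-matrix multiplication:
\begin{equation*}
(W_v^n)^1 = (W_v^{n-1})^1 \bigl[(W_v^1)^{1\top}\ \ (W_v^1)^{2\top}\bigr] = \bigl((W_v^{n-1})^1 (W_v^1)^{1\top}\ \ \ (W_v^{n-1})^1 (W_v^1)^{2\top}\bigr),
\end{equation*}
which is exactly the claimed formula; the remaining rows $(W_v^n)^2$ are defined to be the complement and need no separate verification. I would also briefly note the motivation: the formula expresses the $m$ anchor rows of $W_v^n$ using only the $m$ anchor rows of $W_v^{n-1}$ together with the $m$ anchor rows/columns of $W_v^1$, which is what makes the anchor-based construction of $\widetilde{f_v(W)}$ efficient.

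The step that requires the most care is (ii), the symmetry-based identification of the anchor-column block with $(W_v^1)^{1\top}$; one must check that the relabelling of indices is applied consistently to both rows and columns so that the row and column partitions agree. Once that bookkeeping is in place, the remaining argument is routine block multiplication, so I do not expect a substantive obstacle beyond notational consistency.
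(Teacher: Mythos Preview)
Your proposal is correct and follows essentially the same approach as the paper: both arguments use the symmetry of $W_v^1$ to write its column partition as $\bigl[(W_v^1)^{1\top}\ (W_v^1)^{2\top}\bigr]$ and then read off the anchor-row block of $W_v^n = W_v^{n-1}W_v^1$ by block multiplication. The paper presents this as the explicit base case $n=2$ followed by ``the rest can be defined in the same manner,'' whereas you state the general recursive step directly and are a bit more careful about the index relabelling, but the underlying idea is identical.
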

	\begin{proof}
	    \textit{The second-order similarity graph is computed by $W^{2}_{v}=\left(\begin{array}{l} {(W_{v}^{1})^1} \\  {(W_{v}^{1})^2}\end{array}\right)\left(\begin{array}{ll}{ {(W_{v}^{1})^1}}^{\top}& { {(W_{v}^{1})^2}}^{\top}\end{array}\right)$, and  the second-order anchor-based graph is $ {(W_{v}^{2})^1}=\left(\begin{array}{ll} {(W_{v}^{1})^1} { {(W_{v}^{1})^1}}^{\top} &  {(W_{v}^{1})^1} { {(W_{v}^{1})^2}}^{\top}\end{array}\right)$. The rest can be defined in the same manner\\
    		$$\left\{\begin{array}{cc}
    			{(W_{v}^{3})^1}=\left(\begin{array}{ll} {(W_{v}^{2})^1} { {(W_{v}^{1})^1}}^{\top} &  {(W_{v}^{2})^1} { {(W_{v}^{1})^2}}^{\top}\end{array}\right); \\
    			\vdots \\
    			{(W_{v}^{n})^1}=\left(\begin{array}{ll} {(W_{v}^{n-1})^1} { {(W_{v}^{1})^1}}^{\top} &  {(W_{v}^{n-1})^1} { {(W_{v}^{1})^2}}^{\top}\end{array}\right).
    		\end{array}\right.
    		$$}
	\end{proof}
	
	The objective of anchor-based HMvC (AHMvC) is formulated as
	\begin{equation}
		\begin{aligned}
			\min _{Z^{v}, \gamma^{v}, Z} &\sum_{v=1}^{V} \gamma^{v}\left(\left\|H^{v \top}-\widetilde{H}^{v \top} Z^{v}\right\|_{F}^{2}+\alpha\left\|Z^{v}-\widetilde{f_{v}(W)}\right\|_{F}^{2}\right)\\& + \beta\left\|Z-\sum_{v=1}^{V} \gamma^{v} Z^{v}\right\|_{F}^{2}+\mu\|Z\|_{F}^{2}, \\
			&\text { s.t. } \quad \sum_{v=1}^{V} \gamma^{v}=1, \quad 0 \leq \gamma^{v}.
		\end{aligned}
	\end{equation}
    $Z$ or $Z^v$ has a size of $N\times m$, which represents the similarity between samples and anchors. The optimization strategy of AHMvC is similar to HMvC, and we show it in Algorithm \ref{al2}. As a result, the computation complexity is linear to $N$.
	\begin{algorithm}[htbp]
	\caption{AHMvC}
	\begin{algorithmic}[2]
		\Require
		parameter $\alpha$, $\beta$ and $\mu$, adjacency matrix $ \widetilde{A}^{1}$,...,$\widetilde{A}^{V}$ ($W_{1}^{1}$,..., $W_{V}^{1}$ for non-graph data), feature $X^1$,...,$X^V$, the order of graph filtering $k$, the number of clusters $c$, anchors' num $m$
		\Ensure $c$ clusters
		 \State Initialize $S$ according to Eq. (\ref{single}) and $\gamma^{v}=\frac{1}{V}$;
		 \State Graph filtering by Eq.  (\ref{filter1});
		 \State Obtain $Inds$ of anchors by Eq. (\ref{anchor_get});
		 \State Select filtered anchors' representations $\widetilde{H}^v=H^v[inds]$;
		 \State Construct anchor-based high-order similarity graphs via Lemma \ref{anchor_lemma};
		 \State Continue the remaining steps of HMVC.
	\end{algorithmic}
	\label{al2}
\end{algorithm} 
	\section{Experiment}
	\subsection{Datasets and Metrics} \label{datastss}
	\indent We evaluate HMvC on both multi-view attributed graph datasets and generic feature datasets. For multi-view attributed graph datasets, ACM, DBLP, IMDB  \cite{fan2020one2multi}, AIDS  \cite{nrr}, and Amazon datasets  \cite{shchur2018pitfalls} are chosen.\\
    	    \indent ACM is a paper network from ACM database. Its node features are the elements of a bag-of-words representing of each paper's keywords. Two graphs are constructed by two types of relationships: "Co-Author" means that two papers are written by the same author and "Co-Subject" suggests that they focus on the same field.\\
    	    \indent DBLP is an author network from DBLP database. Its node features are the elements of a bag-of-words representing of each author's keywords. Three graphs are derived from the relationships: "Co-Author", "Co-Conference", and "Co-Term", which indicate that two authors have worked together on the same paper, published papers at the same conference, and published papers with the
            same terms. \\
            \indent IMDB is a movie network from IMDB database.  Its node features correspond to elements of a bag-of-words representing of each movie. The relationships of being acted by the same actor (Co-actor) and directed by the same director (Co-director) are exploited to construct two graphs. \\
            \indent Amazon datasets contain Amazon photos and Amazon computers, which are segments of the Amazon co-purchase network. Their nodes represent goods and features of each good are bag-of-words of product reviews, the edge means that two goods are purchased together. To have multi-view attributes, the second feature matrix is constructed via cartesian product by following \cite{cheng2020multi}. \\
            \indent AIDS consists of graphs representing molecular compounds. Nodes are labeled with the number of corresponding chemical symbol and edges are constructed by the valence of linkage. We think the topological information is more important for AIDS than attributes because feature has only 4 dimensions. All statistical information of datasets is shown in Table \ref{data_1}. \\
            \indent Caltech20 and Caltech7 are frequently used subsets of Caltech101 consisting of 20/7 categories of images built for object recognition tasks. Citeseer is a citation network, whose nodes represent publications. Statistical information of them is shown in Table \ref{data_2}.\\
            
        	\begin{table}[t]
        		\centering
        	   
        		\footnotesize
        		\caption{ multi-view attributed graph datasets}
        		\setlength{\tabcolsep}{0.8mm}{
        		\begin{tabular}{ccccc}
        			\toprule
    			Dataset & Nodes & Features & Graph and Edges & Clusters \\
    			\midrule
    			\multirow{2}[4]{*}{ACM} & \multirow{2}[4]{*}{3,025} & \multirow{2}[4]{*}{1,830} & Co-Subject (29,281) & \multirow{2}[4]{*}{3} \\
    			\cmidrule{4-4}          &       &       & Co-Author (2,210,761) &  \\
    			\midrule
    			\multirow{3}[6]{*}{DBLP} & \multirow{3}[6]{*}{4,057} & \multirow{3}[6]{*}{334} & Co-Author (11,113) & \multirow{3}[6]{*}{4} \\
    			\cmidrule{4-4}          &       &       & Co-Paper (5,000,495) &  \\
    			\cmidrule{4-4}          &       &       & Co-Term (6,776,335) &  \\
    			\midrule
    			\multirow{2}[4]{*}{IMDB} & \multirow{2}[4]{*}{4,780} & \multirow{2}[4]{*}{1,232} &  Co-Actor (98,010) & \multirow{2}[4]{*}{3} \\
    			\cmidrule{4-4}          &       &       & Co-Director (21,018) &  \\
    			\midrule
    			\multirow{2}[4]{*}{Amazon photos} & \multirow{2}[4]{*}{7,487} & 745   & \multirow{2}[4]{*}{Co-Purchase(119,043)} & \multirow{2}[4]{*}{8} \\
    			\cmidrule{3-3}          &       & 7,487 &       &  \\
    			\midrule
    			\multirow{2}[4]{*}{Amazon computers} & \multirow{2}[4]{*}{13,381} & 767   & \multirow{2}[4]{*}{Co-Purchase(245,778)} & \multirow{2}[4]{*}{10} \\
    			\cmidrule{3-3}          &       & 13,381 &       &  \\
    			\midrule
    			\multirow{3}[6]{*}{AIDS} & \multirow{3}[6]{*}{25,163} & \multirow{3}[6]{*}{4} & relationships-1 (18,844) & \multirow{3}[6]{*}{37} \\
    			\cmidrule{4-4}          &       &       & relationships-2 (6,626) &  \\
    			\cmidrule{4-4}          &       &       & relationships-3 (135) &  \\
    			\bottomrule
    		\end{tabular}}
    		\label{data_1}
    	\end{table}%

    	\begin{table}[t]
    		\centering
    		\footnotesize
    		\caption{generic multi-view feature datasets}
    		\begin{tabular}{ccc}
    			\toprule
    			View  & Caltech20/Caltech7 & Citeseer \\
    			\midrule
    			1st   & Gabor(48) & Citation Links (3,312) \\
    			2nd   & Wavelet moments (40) & Words Presence (3,703) \\
    			3rd   & CENTRIST (254) &  \\
    			4th   & HOG (1,984) &  \\
    			5th   & GIST (512) &  \\
    			6th   & LBP (928) &  \\
    			\midrule
    			Nodes & 2,386/1,474 & 3,312 \\
    			\midrule
    			Clusters & 20/7 & 6 \\
    			\bottomrule
    		\end{tabular}%
    		\label{data_2}
    	\end{table}%
	\indent We adopt five popular clustering metrics, including ACCuracy (ACC), Normalized Mutual Information (NMI), F1 score,  Adjusted Rand Index (ARI), and PURity (PUR). A higher value of these metrics indicates a better clustering performance.

		\subsection{Experimental Setup}	\label{setting}
		\indent We compare HMvC and AHMvC with several state-of-the-art multi-view graph clustering methods on graph data: graph learning-based methods, like RMSC  \cite{xia2014robust}, MNE  \cite{zhang2018scalable}, PwSC  \cite{nie2017self}, and MvAGC  \cite{lin2021graph}; GNN-based methods, like O2MA, O2MAC  \cite{fan2020one2multi}, SDCN  \cite{bo2020structural}, HDMI  \cite{hdmi22}, DAEGA  \cite{wang2019attributed}, CMGEC  \cite{wang2021consistent}, and MAGCN  \cite{chen2020graph}; contrastive learning-based methods, like COMPLETER  \cite{lin2021completer} and MVGRL \cite{hassani2020contrastive}.\\
		\indent RMSC employs low-rank and sparse decomposition to achieve robust spectral clustering. MNE learns multi-view graph embedding for clustering and PwSC is a parameter-weighted multi-view graph clustering method. MvAGC is a scalable method that explores high-order interaction among topological structures. O2MA and O2MAC select the most informative view to learn representation for clustering. HDMI learns node embeddings by exploring high-order mutual information. SDCN and DAEGA are two deep clustering methods aiming to collect structural information via GCN and graph attention auto-encoder, respectively. CMGEC adds graph fusion network on multiple graph auto-encoders to obtain a consistent embedding. MAGCN applies graph auto-encoder on both attributes and topological graphs. COMPLETER and MVGRL learn a common representation shared across multiple views and different graphs via contrastive mechanism, respectively. \\
		\indent Furthermore, we compare HMvC with several traditional methods on generic multi-view data, including AMSL  \cite{nie2016parameter}, MLRSSC  \cite{brbic2018multi}, MSC\_IAS  \cite{wang2019multi}, LMVSC  \cite{kang2020large}, and SMC \cite{liu2022scalable}. AMSL is a parameter-free auto-weighted multiple
		graph learning framework and remaining methods are subspace clustering methods.\\
		\indent We set $\alpha$, $\beta$, and $\mu$ in the same range $[1e-3, 1, 1e2, 1e3, 1e4]$. On all datasets, we set filtering order $k=2$. We utilize third-order similarity on graph data while second-order on non-graph data, then we tune the parameters to obtain the best results. On the medium-size datasets, we choose the number of anchors $m=[50, 100]$ for AHMvC.
		Almost all experiments are conducted on the same machine with Intel(R) Core(TM) i7-8700 3.20GHz CPU, two GeForce GTX 1080 Ti GPUs, and 64GB RAM. MVGRL  \cite{hassani2020contrastive} is implemented on Google Colab with Intel(R) Xeon(R) CPU 2.30GHz CPU, one Tesla K80, and 12GB RAM.
		
		\subsection{Experimental Results}
		\subsubsection{Clustering task on graph data}
		The results on graph data are shown in Table \ref{re_1} and Table \ref{re_2}. Our method achieves the best performance in most cases.\\
		\begin{table*}[htbp]
		\centering
        \small
		\caption{The results of HMvC ($k=2,n=3$) and other methods. ‘OM’ means that the method raises out-of-memory problem. \textbf{Bold} numbers indicate the best score.}	
		\setlength{\tabcolsep}{0.8mm}{
		\begin{tabular}{ccccccccccccc}
            \toprule
            \multicolumn{1}{c}{\multirow{2}[4]{*}{  Method}} & \multicolumn{4}{c}{ACM}       & \multicolumn{4}{c}{DBLP}      & \multicolumn{4}{c}{IMDB} \\
        \cmidrule{2-13}          & ACC   & NMI   & ARI   & F1    & ACC   & NMI   & ARI   & F1    & ACC   & NMI   & ARI   & F1 \\
            \midrule
            MNE   & 0.6370  & 0.2999  & 0.2486  & 0.6479  & OM    & OM    & OM    & OM    & 0.3958  & 0.0017  & 0.0008  & 0.3316  \\
            RMSC  & 0.6315  & 0.3973  & 0.3312  & 0.5746  & 0.8994  & 0.7111  & 0.7647  & 0.8248  & 0.2702  & 0.0054  & 0.0018  & 0.3775  \\
            PwMC  & 0.4162  & 0.0332  & 0.0395  & 0.3783  & 0.3253  & 0.0190  & 0.0159  & 0.2808  & 0.2453  & 0.0023  & 0.0017  & 0.3164  \\
            SDCN  & 0.8631  & 0.5783  & 0.6387  & 0.8619  & 0.6497  & 0.2977  & 0.3099  & 0.6377  & 0.4047  & 0.0099  & 0.0109  & 0.3535  \\
            DAEGC  & 0.8909  & 0.6430  & 0.7046  & 0.8906  & 0.8733  & 0.6742  & 0.7014  & 0.8617  & 0.3683  & 0.0055  & 0.0039  & 0.3560  \\
            O2MAC  & 0.9042  & 0.6923  & 0.7394  & 0.9053  & 0.9074  & 0.7287  & 0.7780  & 0.9013  & 0.4502  & 0.0421  & 0.0564  & 0.1459  \\
            HDMI  & 0.8737  & 0.6453  & 0.6736  & 0.8720  & 0.8846  & 0.6918  & 0.7530  & 0.8652  & 0.5835  & \textbf{0.1692 } & \textbf{0.2033 } & 0.5003  \\
            CMGEC & 0.9089  & 0.6912  & 0.7232  & 0.9072  & 0.9103  & 0.7237  & 0.7859  & 0.9042  & 0.4844  & 0.0514  & 0.0469  & \textbf{0.5101 } \\
            MvAGC  & 0.8975  & 0.6735  & 0.7212  & 0.8986  & 0.9277  & 0.7727  & 0.8276  & 0.9225  & 0.5633  & 0.0371  & 0.0940  & 0.3783  \\
            AHMvC & 0.8981  & 0.6745  & 0.7201  &0.8992  & 0.9307  & 0.7750  & 0.8328  & 0.9263  & 0.5525  & 0.0180  & 0.0434  & 0.3188  \\
            HMvC  & \textbf{0.9110 } & \textbf{0.6988 } & \textbf{0.7521 } & \textbf{0.9124 } & \textbf{0.9322 } & \textbf{0.7844 } & \textbf{0.8370 } & \textbf{0.9276 } & \textbf{0.6006 } & 0.1001  & 0.1805  & 0.4599  \\
            \bottomrule
            \end{tabular}}
		\label{re_1}
	\end{table*}%
	
    	 
	            \begin{table*}[htbp]
        	   \renewcommand\arraystretch{0.8}
                 \small
        		\centering    
        		\caption{The results on medium-size datasets ($k=2,n=3$). Some methods are removed due to OM.}
    	       
            
        		\setlength{\tabcolsep}{0.9mm}{
        		\begin{tabular}{ccccccc}
                \toprule
                Datasets & Method & ACC   & NMI   & ARI   & F1    & Time(s) \\
            \cmidrule{2-7}    \multirow{5}[2]{*}{Amazon Photos} & COMPLETER & 0.3678  & 0.2606  & 0.0759  & 0.3067  & 421.3 \\
                      & MVGRL & 0.5054  & 0.4331  & 0.2379  & 0.4599  & 994.2 \\
                      & MAGCN & 0.5167  & 0.3897  & 0.2401  & 0.4736  & 3783.6 \\
                     & MvAGC & 0.6775  & 0.5237  & 0.3968  & 0.6397  & \textbf{37.8} \\
                      & AHMvC  & \textbf{0.7332 } & \textbf{0.6504 } & \textbf{0.5788 } & \textbf{0.7188 } & 60.5 \\
                \midrule
                \multirow{5}[2]{*}{Amazon Computers} & COMPLETER & 0.2417  & 0.1562  & 0.0536  & 0.1601  & 844 \\
                      & MVGRL & 0.2450  & 0.1012  & 0.0553  & 0.1706  & 1520 \\
                      & MvAGC & 0.5796  & 0.3957  & 0.3224  & 0.4117  & \textbf{164.8} \\
                      & AHMvC  & \textbf{0.5904 } & \textbf{0.5298 } & \textbf{0.3729 } & \textbf{0.4892 } & 199.7 \\
                \midrule
                \multirow{2}[2]{*}{AIDS} & MvAGC & \textbf{0.5118 } & 0.0700  & 0.0182  & 0.0364  & 847.9 \\
                      & AHMvC  & 0.2629  & \textbf{0.2799 } & \textbf{0.0861 } & \textbf{0.1714 } & \textbf{752.5} \\
                \bottomrule
                \end{tabular}
                 \label{re_2}}
        	\end{table*}%
	
		 \indent GNN can capture structural information effectively, so the GNN-based methods achieve better performance than shallow methods: MNE, PMSC, and PwMC. Among GNN-based methods, CMGEC achieves the best score in most metrics. However, HMvC improves CMGEC by almost 11$\%$, 5$\%$, 14$\%$ on ACC, NMI, ARI on IMDB, respectively. This is because CMGEC doesn't explore high-order information, which will cause information loss. Compared to MvAGC, HMvC improves ACC by almost 2.1$\%$, 0.7$\%$, 3.2$\%$ on ACM, DBLP, IMDB, respectively. HMvC produces a little better performance than AHMvC. With respect to contrastive learning methods, our improvement is also significant. This indicates that capturing high-order relation in attributes is promising.   \\
		 \indent Our method is also efficient and consumes less memory compared to deep learning methods, which is appealing in practice. Specifically, benefiting from anchor idea, on medium-size attributed graph datasets, like Amazon and AIDS datasets, MvAGC and AHMvC are the most efficient methods. However, AHMvC improves MvAGC on ACC by more than 5$\%$ and 1$\%$ on Amazon photos, Amazon computers, respectively. On AIDS, MvAGC achieves better ACC while AHMvC is much better on all other metrics. In fact, 
		high-order information is limited on AIDS since its feature just have 4 dimensions. From another perspective, our optimization converges fast. As observed from Fig. \ref{losses}, the value of HMvC/AHMvC's objective reaches its lower bound in less than 20 iterations. 
        	    \begin{figure*}[hbtp]
        	    
    		    \centering
    			\includegraphics[width=0.32\textwidth]{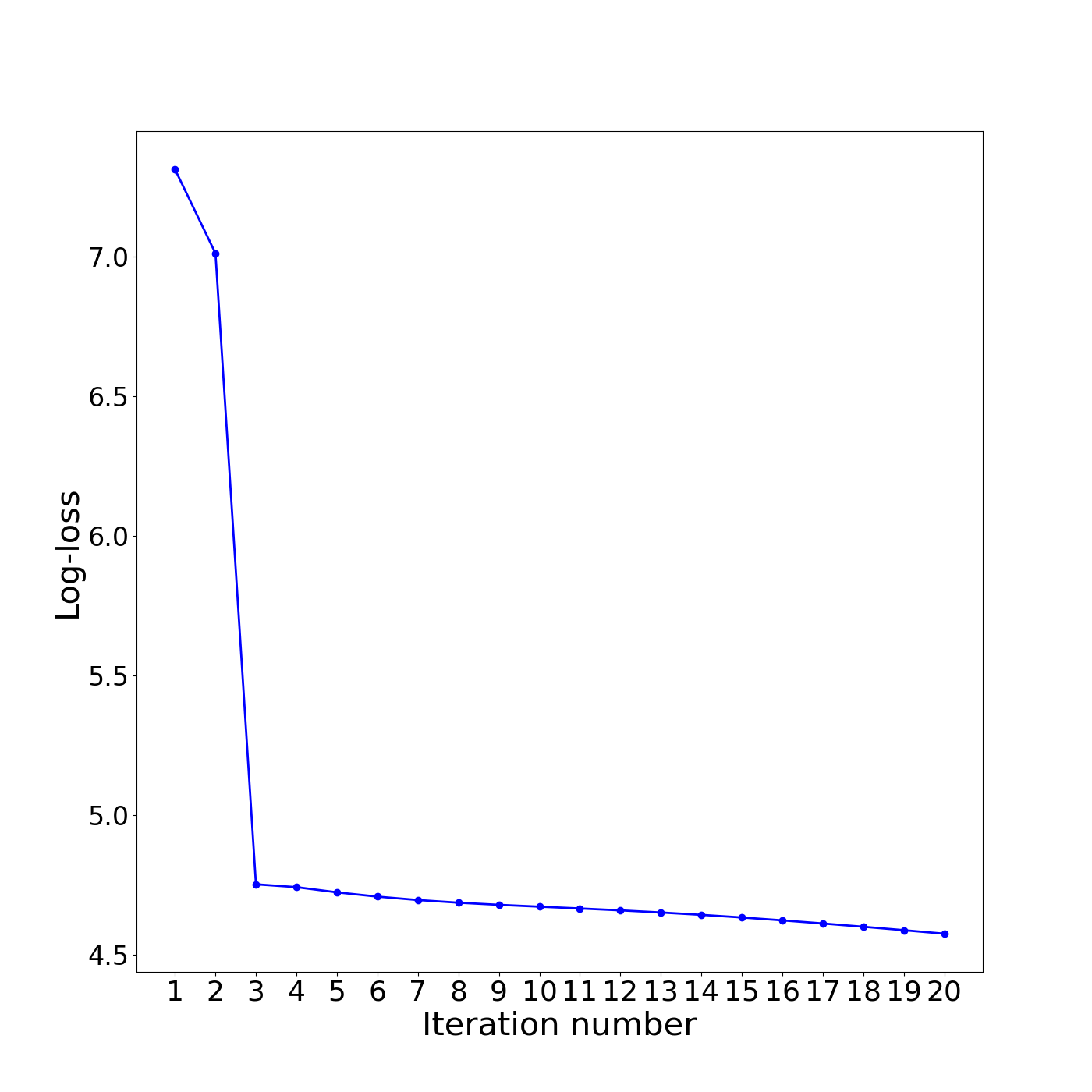}
    			\includegraphics[width=0.32\textwidth]{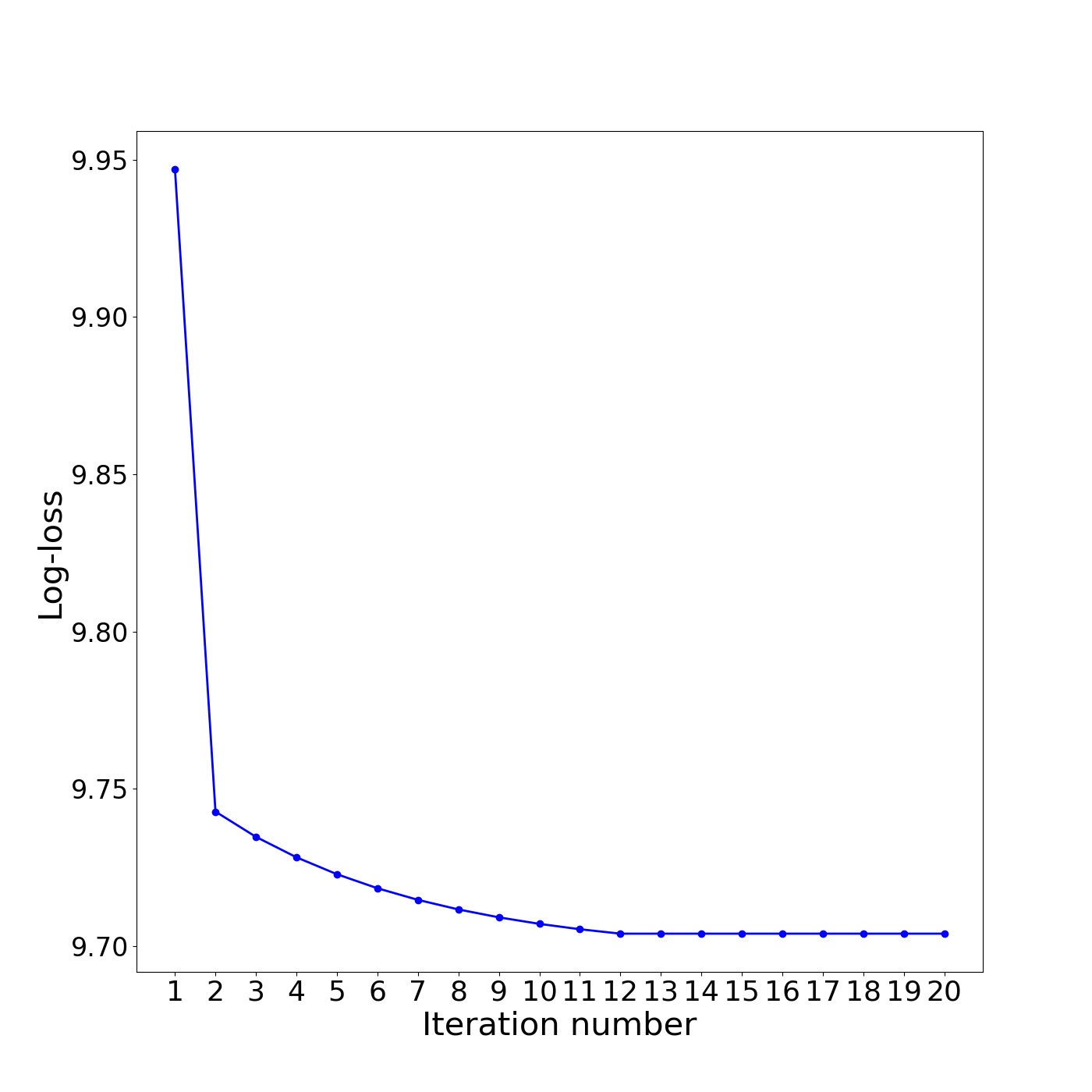}
    			\includegraphics[width=0.32\textwidth]{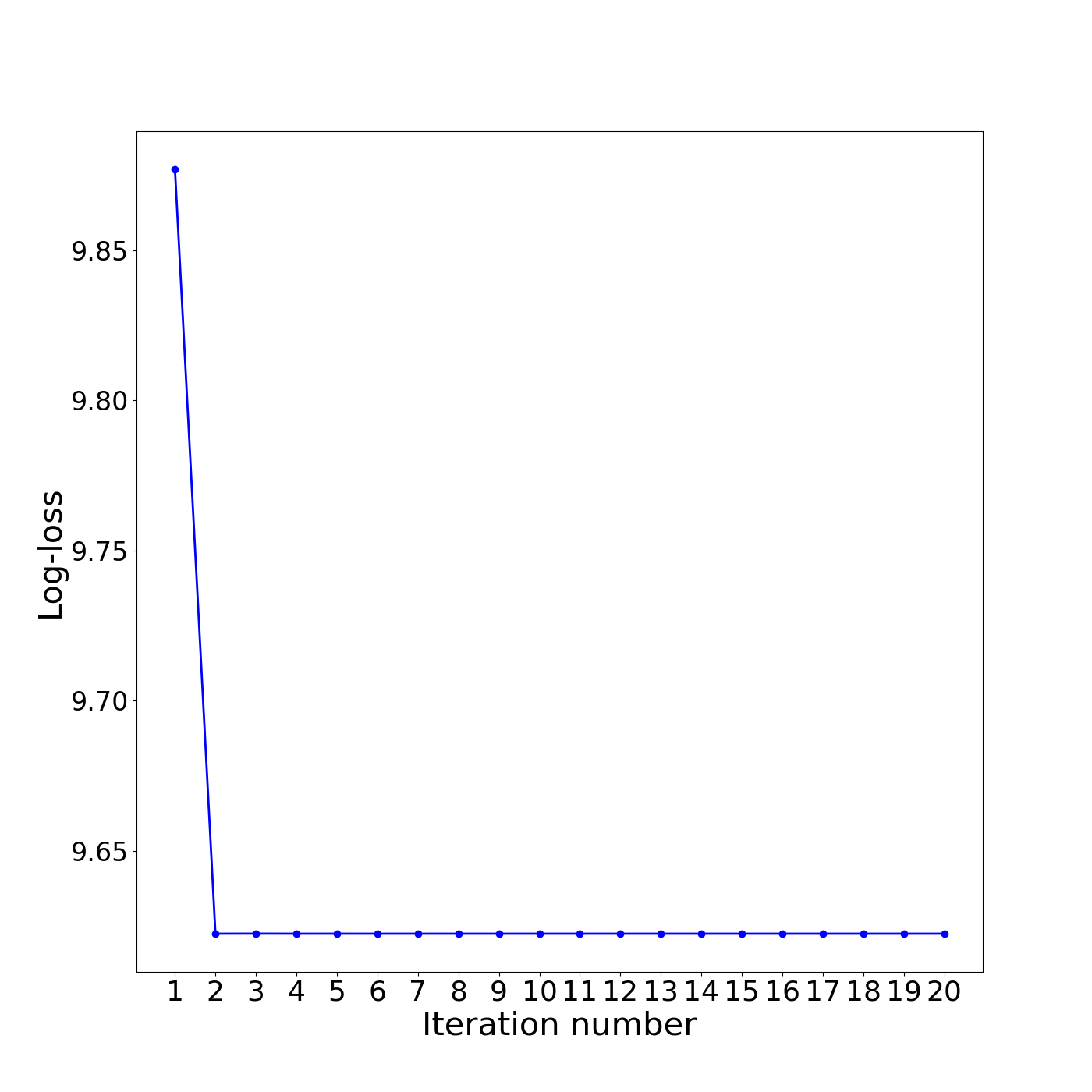}
    			\caption{The objective variation on ACM, Amazon photos and Amazon computers.}
    			\label{losses}
    		\end{figure*}

		\subsubsection{Clustering task on non-graph data}

		All results on feature data are shown in Table \ref{re_3}. 
		\begin{table*}[htbp]
		 \renewcommand\arraystretch{0.8}
			\centering
            \small 
			\caption{The results on feature data ($k=2, n=2$).}
    \begin{tabular}{cccccccc}
    \toprule
    Datasets & Metrics & AMSL  & LMVSC & MLRSSC & MSC\_IAS & SMC   & HMVC \\
    \midrule
    \multirow{3}[1]{*}{Caltech7} & ACC   & 0.4518  & 0.7266  & 0.3731  & 0.3976  & 0.7869  & \textbf{0.8215 } \\
          & NMI   & 0.4243  & 0.5193  & 0.2111  & 0.2455  & 0.4829  & \textbf{0.5696 } \\
          & PUR   & 0.4674  & 0.7517  & 0.4145  & 0.4444  & \textbf{0.8860 } & 0.8398  \\
          \midrule
    \multirow{3}[0]{*}{Caltech20} & ACC   & 0.3013  & 0.5306  & 0.3731  & 0.3127  & 0.5716  & \textbf{0.6249 } \\
          & NMI   & 0.4054  & 0.5271  & 0.2111  & 0.3138  & 0.5458  & \textbf{0.6327 } \\
          & PUR   & 0.3164  & 0.5847  & 0.4145  & 0.3374  & 0.6266  & \textbf{0.7225 } \\
          \midrule
    \multirow{3}[0]{*}{Citeseer} & ACC   & 0.1687  & 0.5226  & 0.2509  & 0.3411  & 0.5600  & \textbf{0.6419 } \\
          & NMI   & 0.0023  & 0.2571  & 0.0267  & 0.1153  & 0.2985  & \textbf{0.3707 } \\
          & PUR   & 0.1687  & 0.5446  & 0.6370  & \textbf{0.8076 } & 0.5600  & 0.6542  \\
          \bottomrule
    \end{tabular}%
			\label{re_3}%
		\end{table*}%
	
		\indent Although SMC is the best one among compared methods on these datasets, HMvC outperforms SMC by almost 4$\%$, 5$\%$, 8$\%$ on ACC on three benchmarks. The improvements of NMI and PUR are up to 8$\%$ and 5$\%$ on average. This is mainly attributed to the exploitation of high-order information. These results indicate that our approach can handle both graph and generic feature data effectively.
	
		\section{Ablation Studies}
	
		\subsection{Effect of Graph Filtering}
         \begin{figure*}[t]
    	    \begin{minipage}{0.48\textwidth}
            \includegraphics[width=1\textwidth]{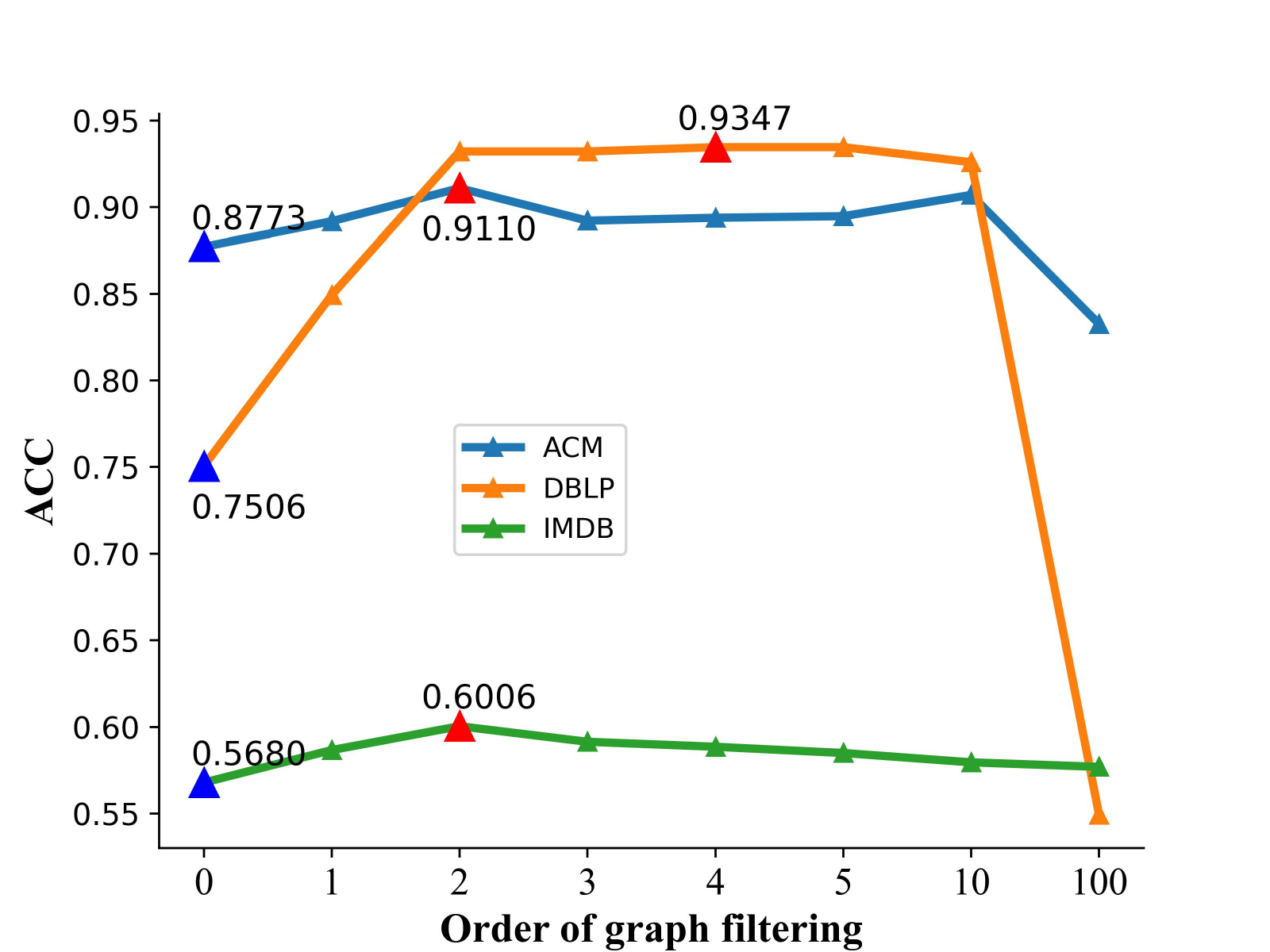}
            \caption{Effect of graph filtering order $k$. }
          \label{efk}
    	    \end{minipage}
    	     \begin{minipage}{0.48\textwidth}
    	    \includegraphics[width=1\textwidth]{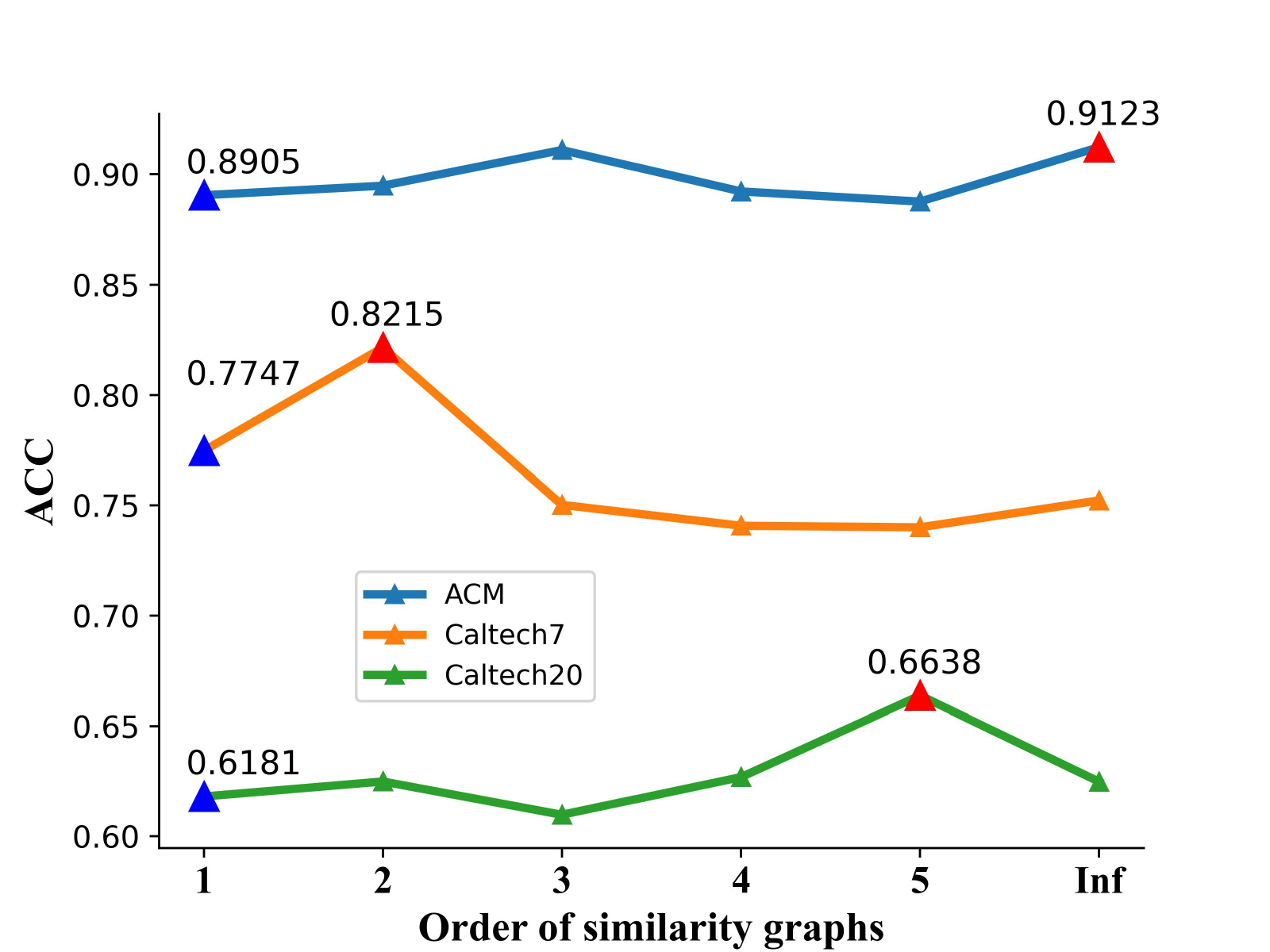} 
    	    \caption{Effect of  similarity graphs' order $n$. }\label{efn}
    	    \end{minipage}
    	   \end{figure*}

          
		\indent Graph filtering is aimed to filter out undesirable high-frequency noise as well as retaining graph geometric features \cite{lin2021graph}. To show its effect in HMvC, we report the results of different $k$ on three multi-view graph datasets in Fig. \ref{efk}. \textcolor{red}{Red} Triangles indicate the best results, while \textcolor{blue}{bule} ones represent results without graph filtering. As $k$ increases, all results get better first but then become worse, and the best results are obtained around $k= 2$. Too large $k$ are not necessary since it will lead to over-smoothing problem, where the filtered representation becomes too smooth to distinguish.\\
	     \indent We can see that the effects on different datasets are different. The improvement on DBLP is more than 18$\%$, while 4$\%$ and 3$\%$ on ACM and IMDB. This is because distribution of  eigenvalues varies on different datasets \cite{magc4}. Specifically, there are more high-frequency components in IMDB than others, which would weaken the effectiveness of graph filter.
    				
		\subsection{Effect of high-order neighborhood information} \label{ech}
       \indent We set different order $n$ to explore the effect of high-order relationships. In Fig. \ref{efn}, \textcolor{blue}{blue} triangles indicate the results with the first-order graph. Caltech7 and Caltech20 achieve the best performance at $n=2$ and $n=5$, and ACC improvement is almost 5$\%$ with respect to $n=1$. Therefore, the results in Tables \ref{re_1}-\ref{re_3} can be further improved if $n$ is tuned. On ACM, HMvC obtains better results on the third-order similarity graph, and the improvement of ACC is up to 2.1$\%$ compared with the first-order. It also benefit from the infinity-order graph. Although exploring high-order neighborhood information with a larger $n$ may not achieve better performance because there could be too much extraneous noisy information, the result of high-order similarity is often better than that of the first-order one. The suitable value of $n$ is chosen less than 5 or the infinite value empirically.\\
      \begin{figure*}[htbp]
      \centering
                	\subfloat[Citeseer]{
                		\includegraphics[width=0.23\textwidth]{./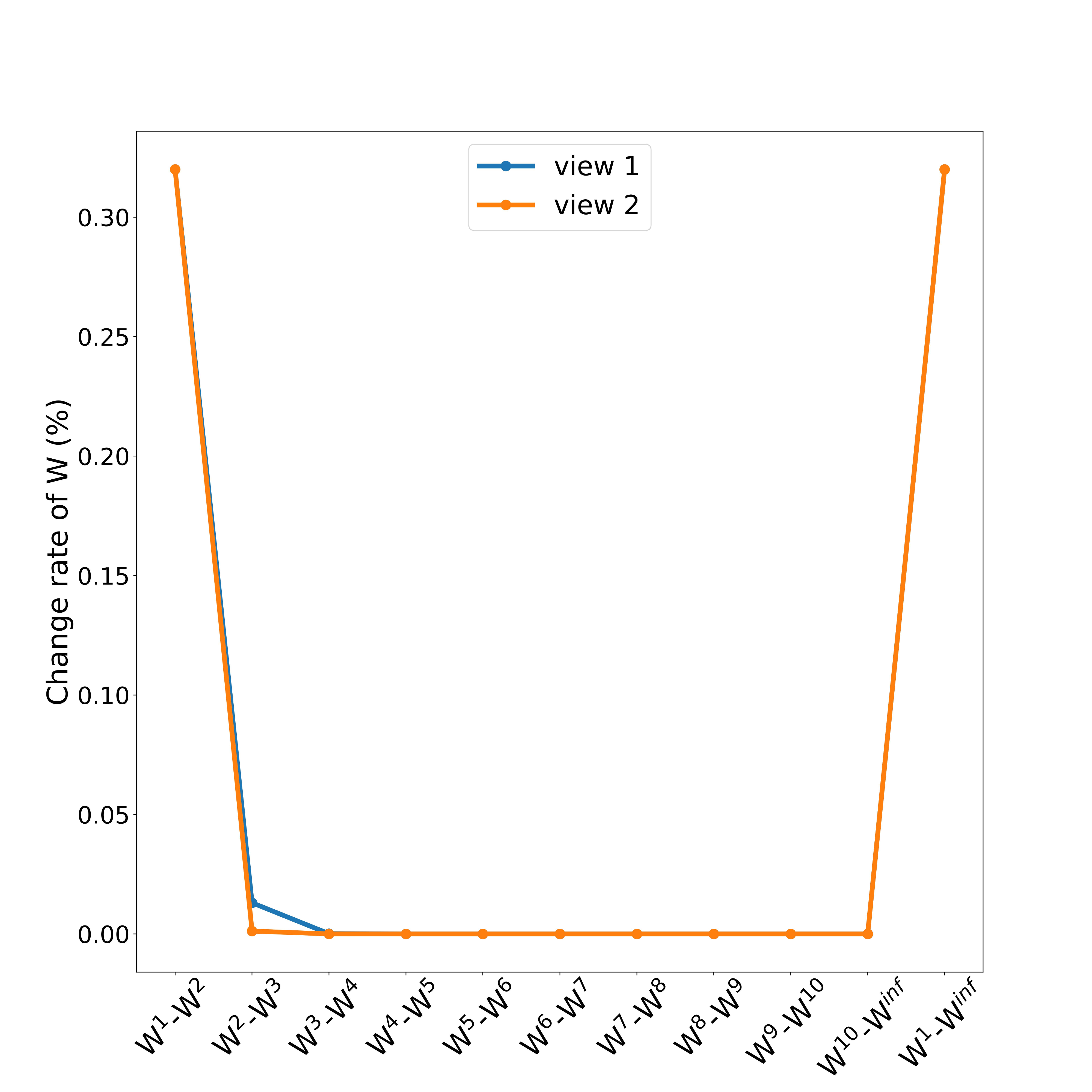}
                	}	\hspace{-4pt}
                	\subfloat[DBLP]{
                		\includegraphics[width=0.23\textwidth]{./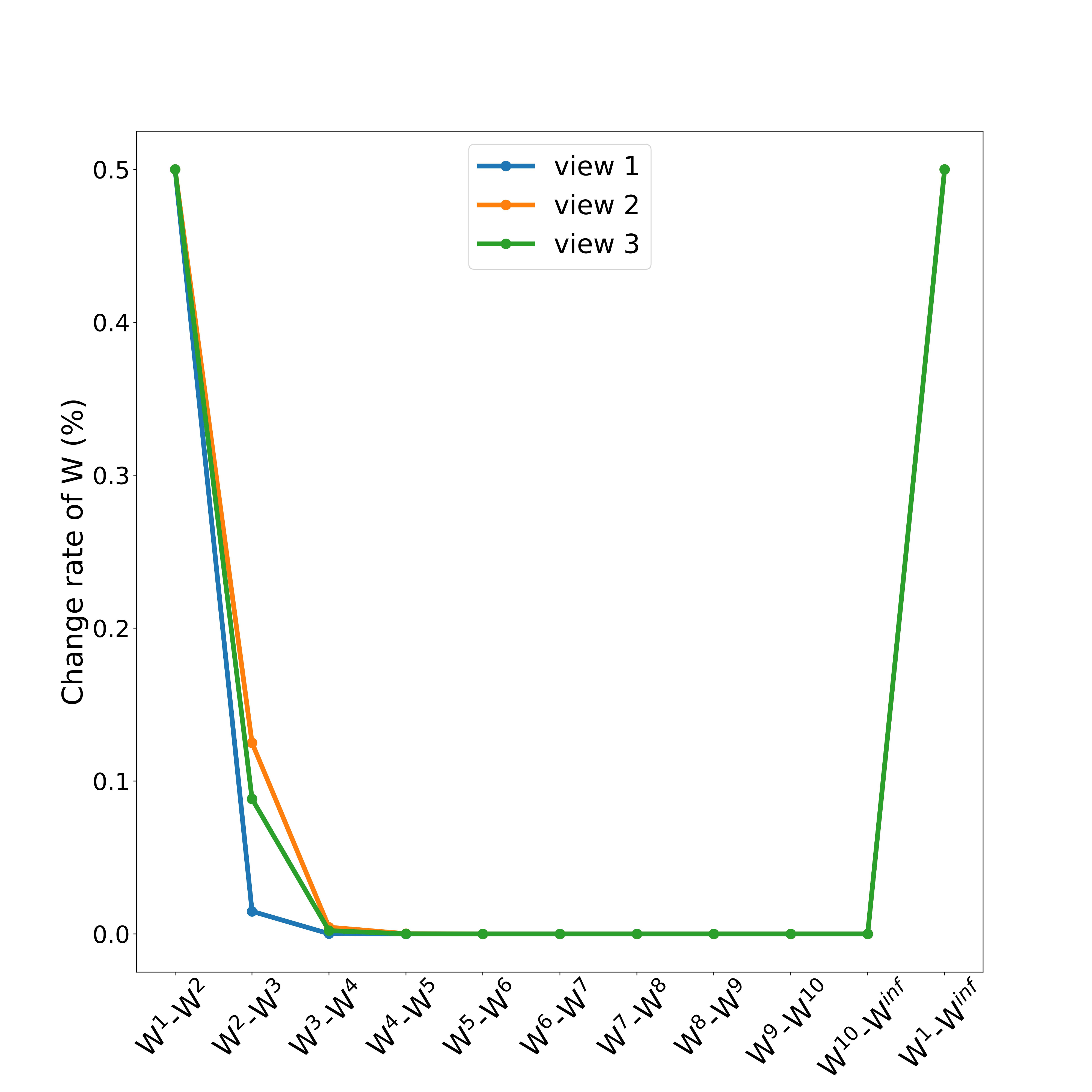}
                	}
                	\subfloat[Caltech7]{
                		\includegraphics[width=0.23\textwidth]{./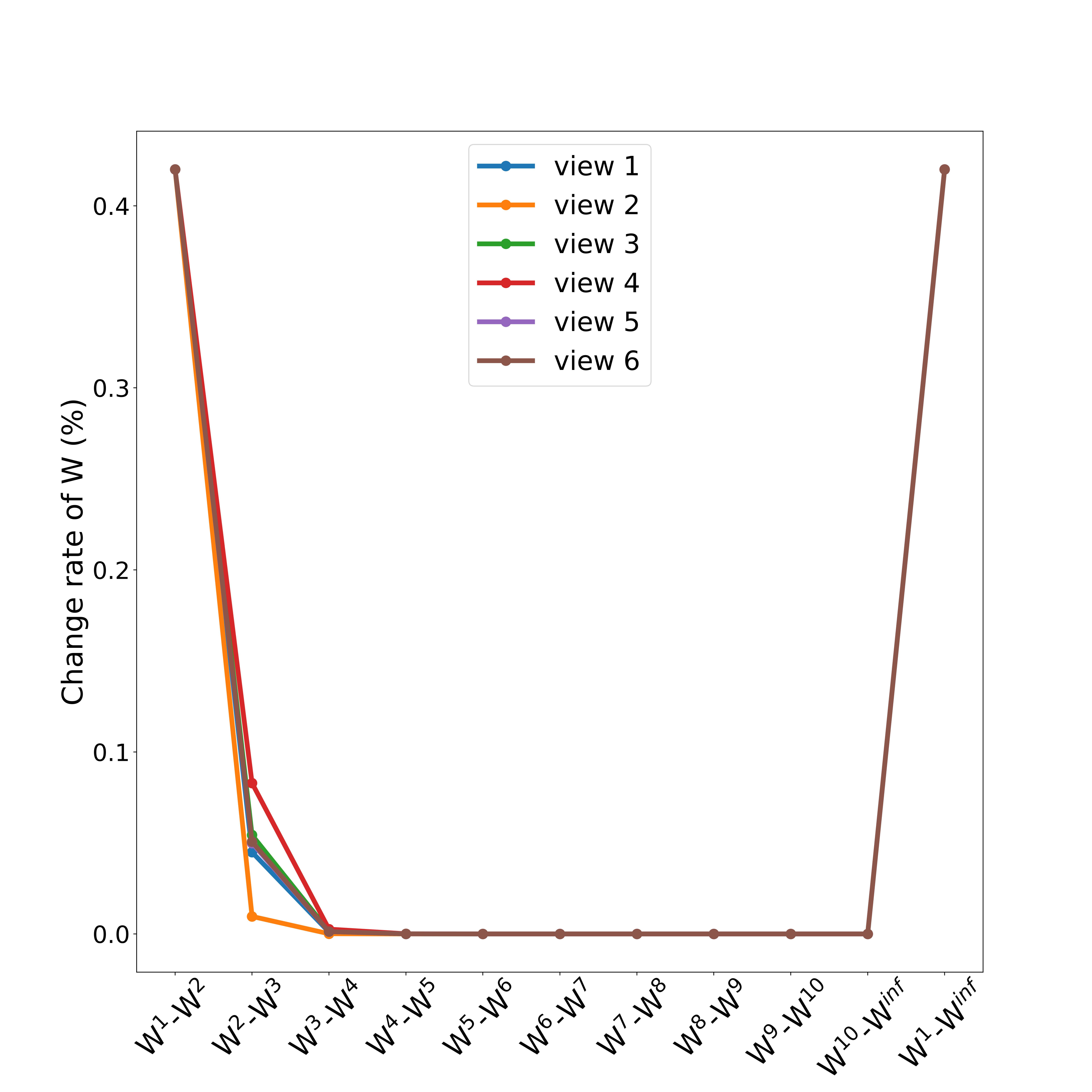}
                	}	\hspace{-4pt}
                	\subfloat[Caltech20]{
                		\includegraphics[width=0.23\textwidth]{./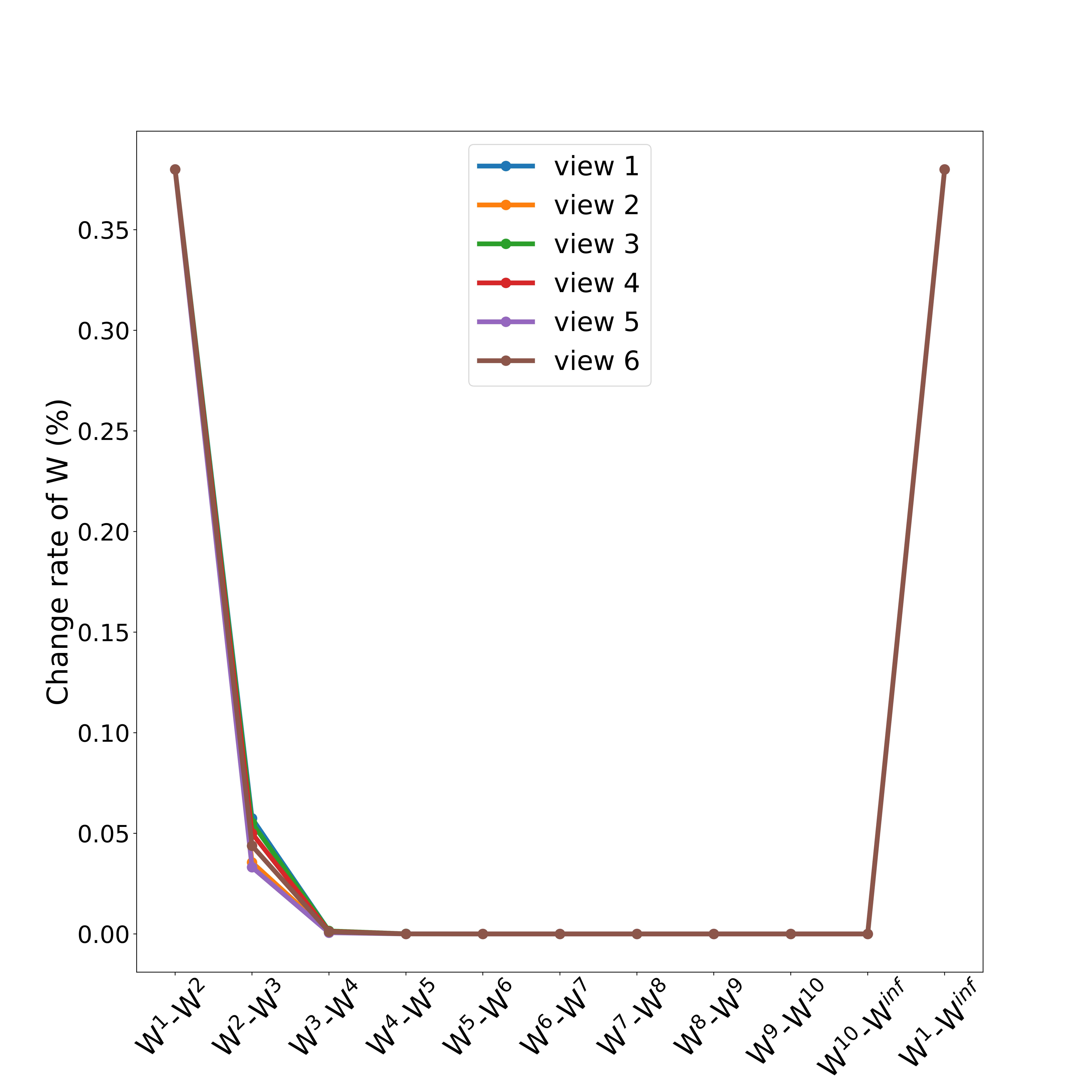}
                	}
                	\caption{Rate of change in similarity matrices between different orders. }
                	\label{rate}
    	    \end{figure*}
       \indent In Fig. \ref{rate}, we demonstrate the average of change rate, i.e., $\frac{|{(W_v^n)}_{ij}-{(W_v^{n-1})}_{ij}|}{|{(W_v^{n-1})}_{ij}|}$,  in similarity matrices between different orders on Citeseer, DBLP, Caltech7, and Caltech20. Clearly, there exists u-shape and two big changes on all results. The first one occurs between first-order and second-order, which means the information of these two matrices are much different. The big change between the finite-order and infinity-order matrix suggests that the infinity-order similarity contain some unique information. There is little change between other orders. This analysis validates that the second-order and infinity-order similarity matrices contain much valuable information.
                
           	\subsection{Parameter Analysis} 
           	\label{paras ans}
			\indent There are three trade-off parameters in our method, $\alpha$, $\beta$, and $\mu$. Fig. \ref{paras} shows sensitivity analysis of three parameters on Caltech7. We fix one parameter and vary others in each figure. We find too large or too small $\alpha$ results in sub-optimal performance. When fix $\alpha$ at a suitable value, we find results change little if $\beta$ or $\mu$ varies. However, too large $\beta$ or $\mu$ weakens high-order graph part  and results in inferior performance.
    		\begin{figure*}[hbtp]
    			\centering
    				\includegraphics[width=0.23\textwidth]{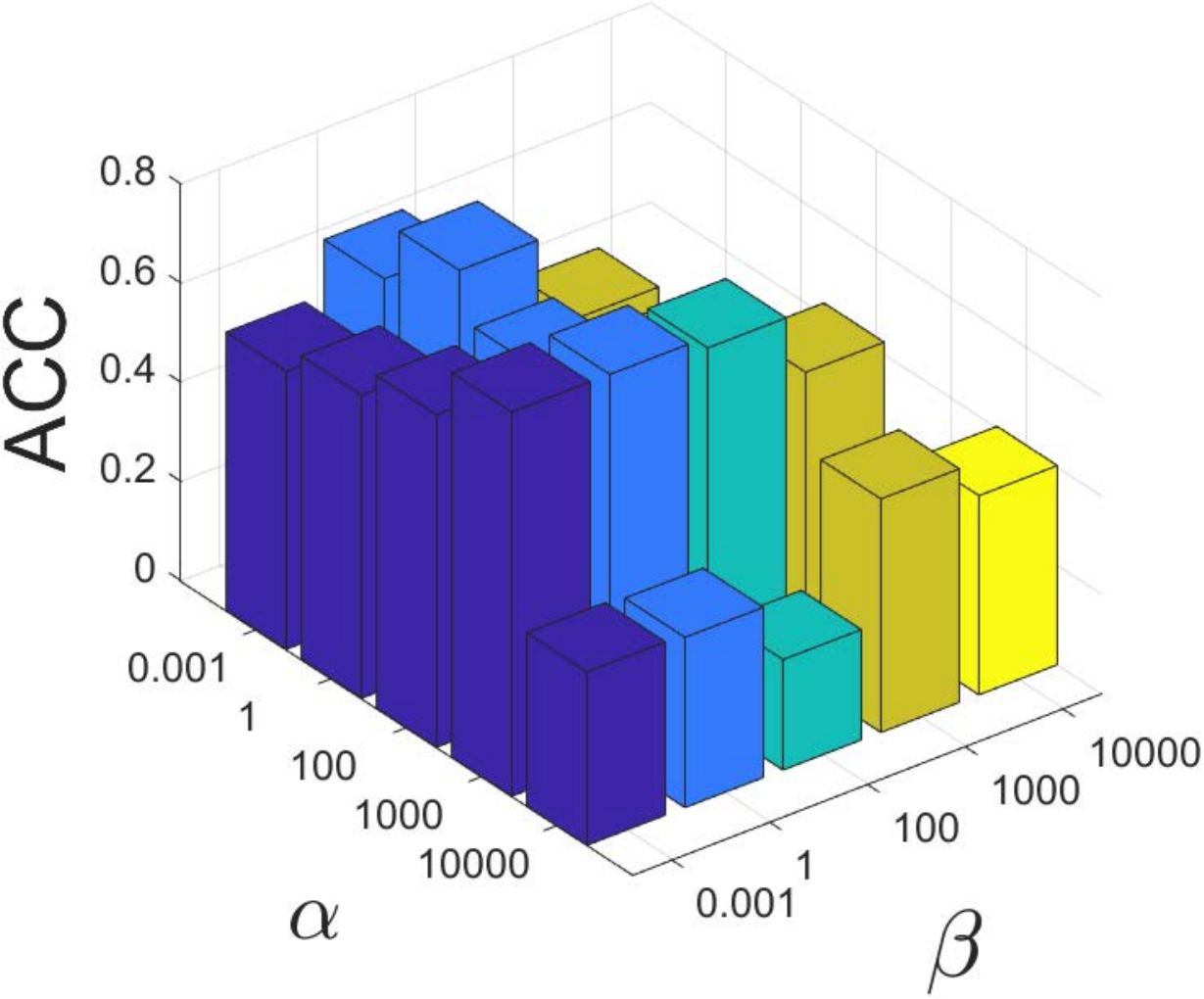}
    				\includegraphics[width=0.23\textwidth]{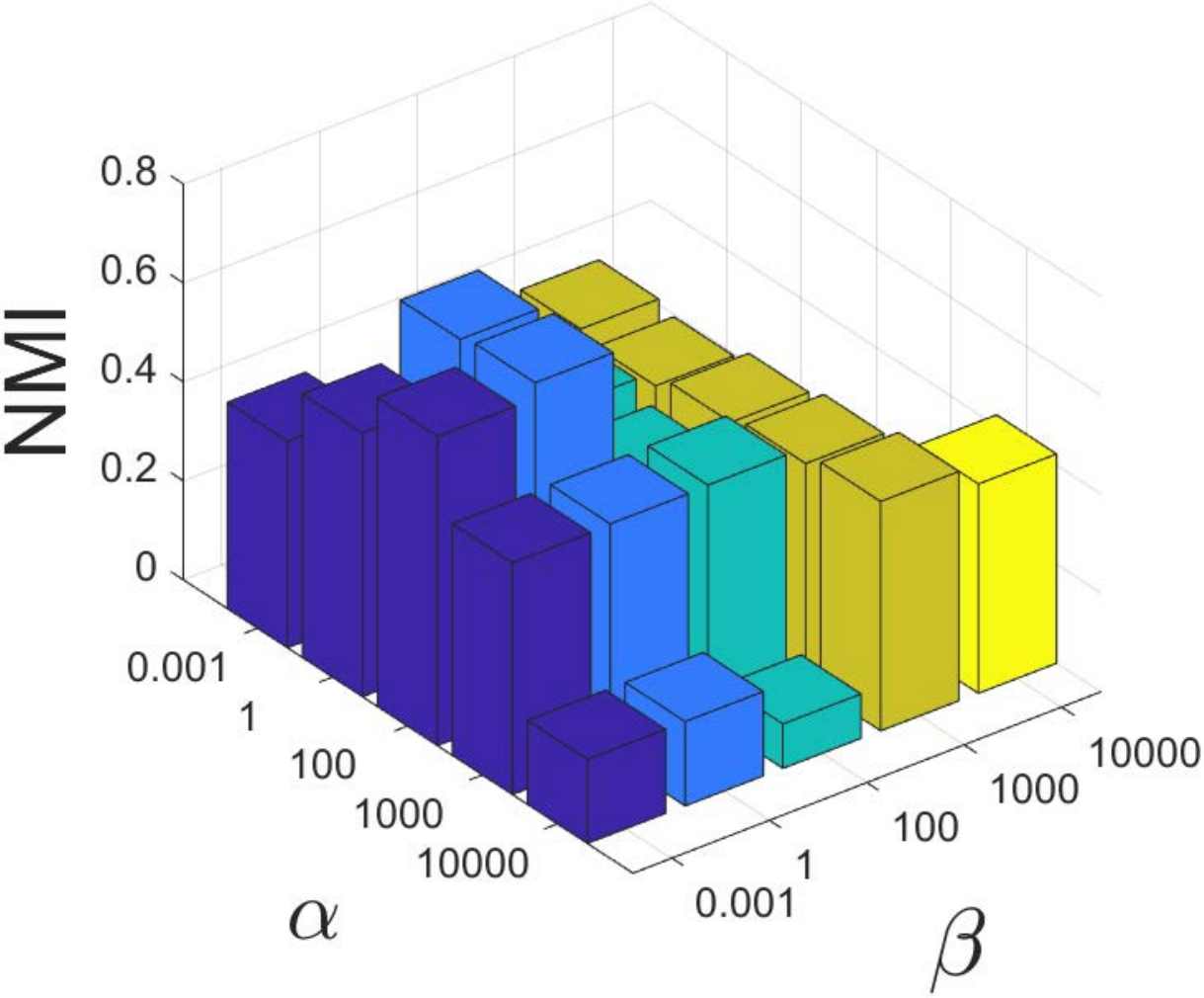}
    				\includegraphics[width=0.23\textwidth]{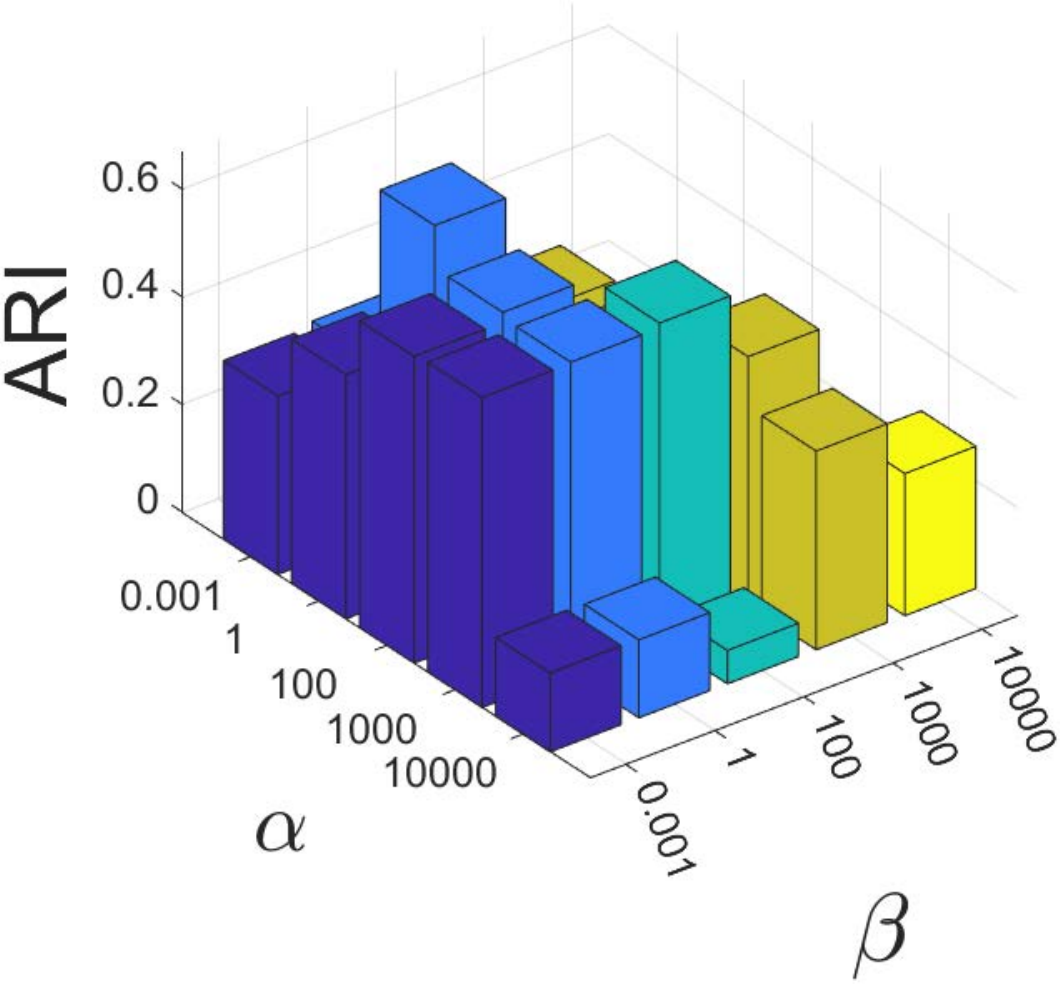}
    				\includegraphics[width=0.23\textwidth]{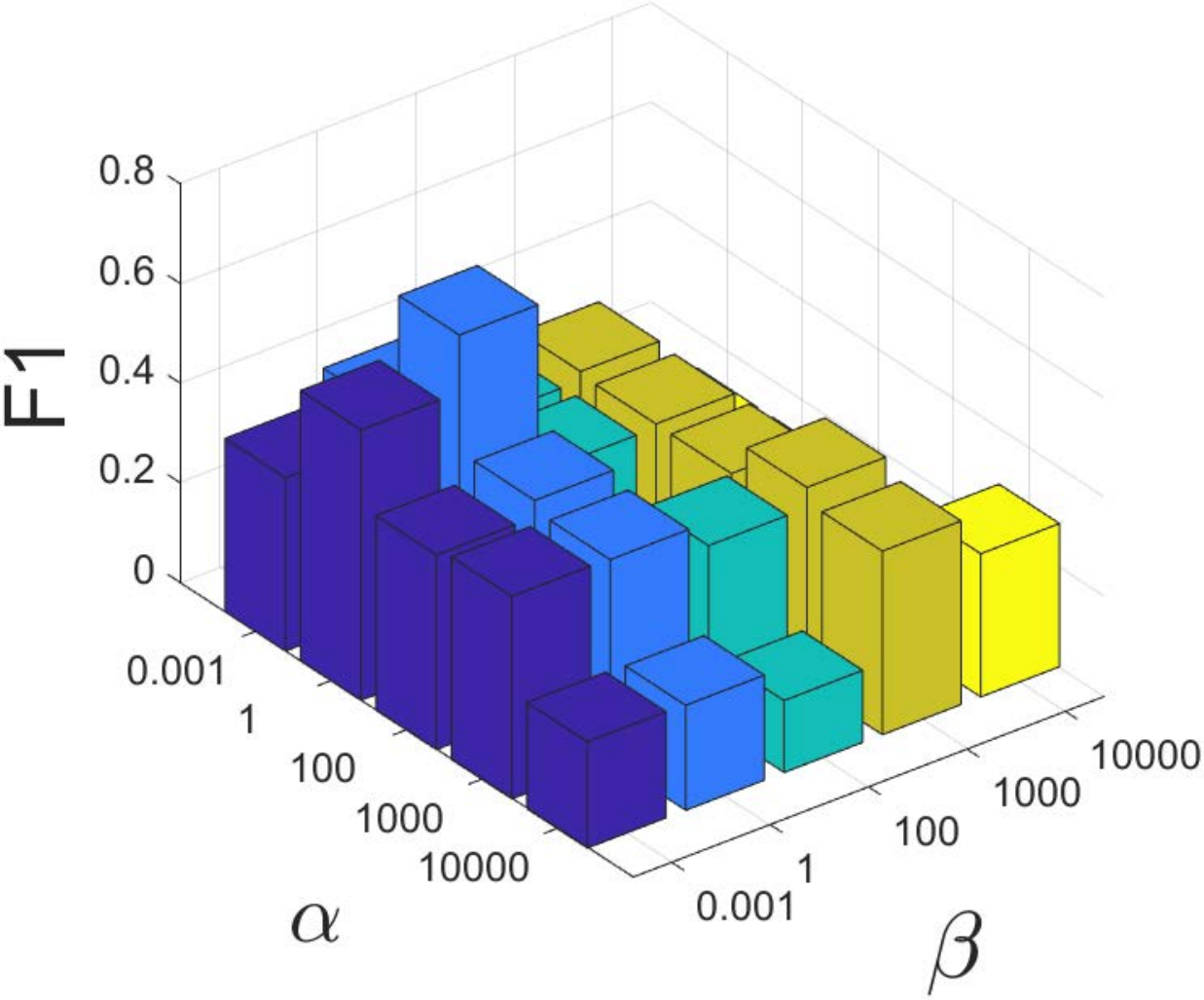}
    				\includegraphics[width=0.23\textwidth]{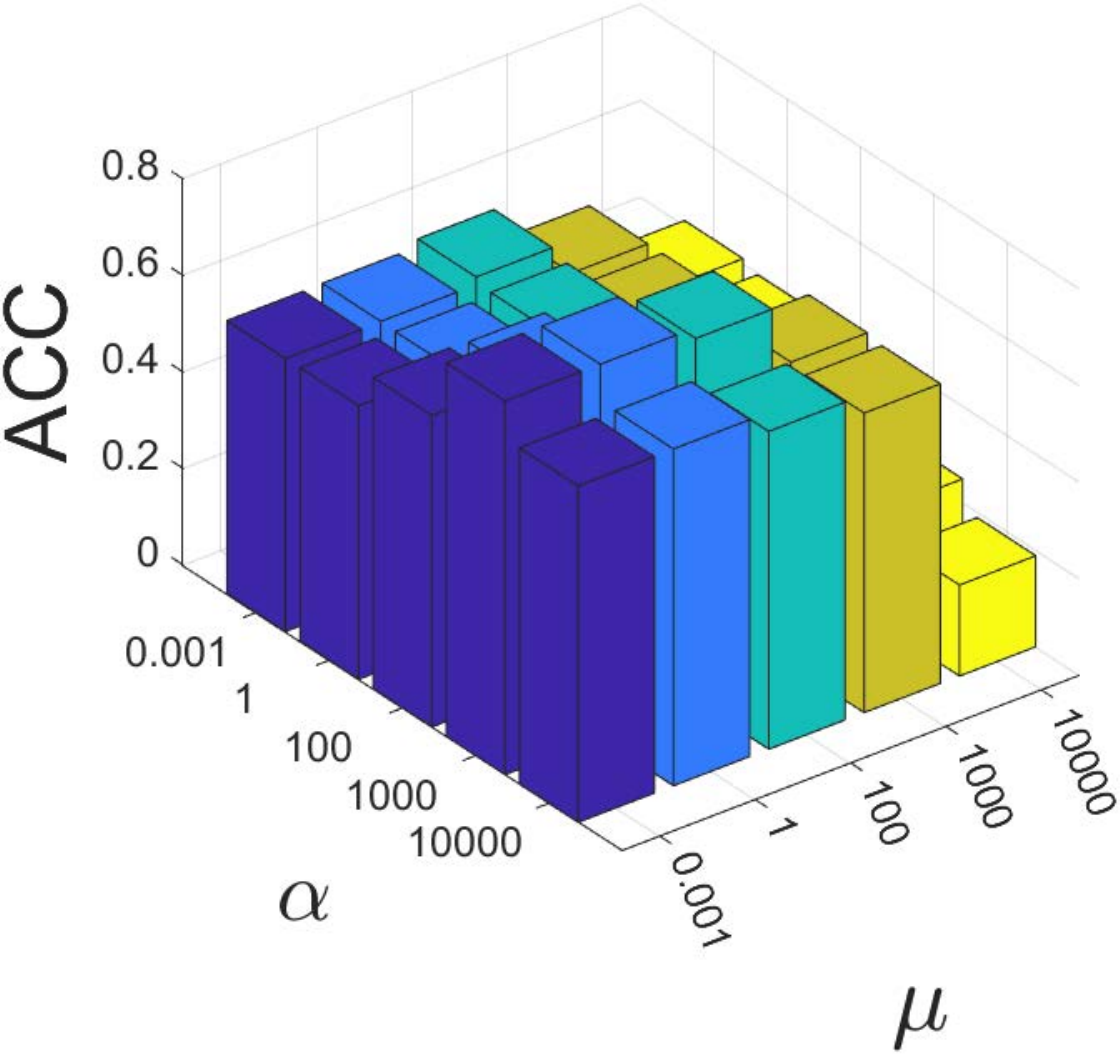}
    				\includegraphics[width=0.23\textwidth]{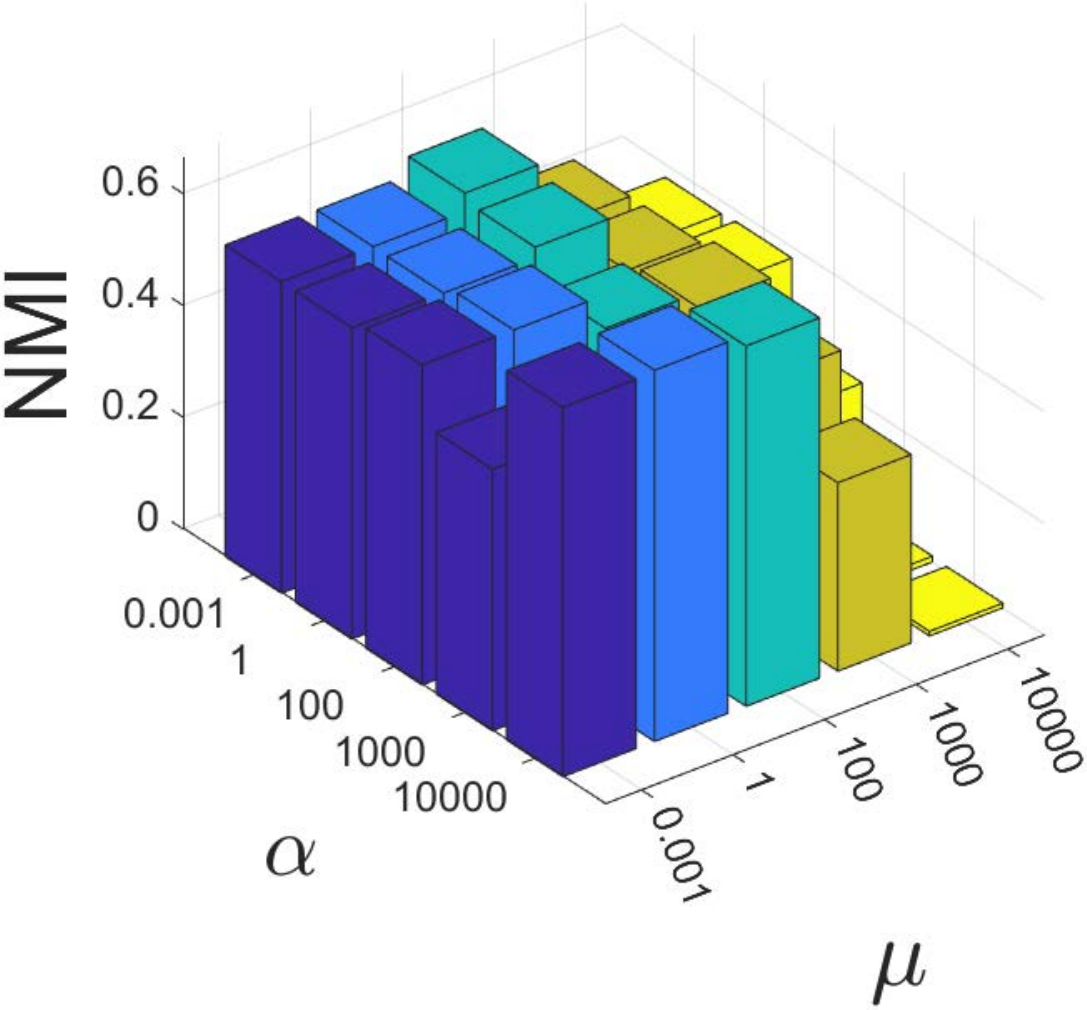}
    				\includegraphics[width=0.23\textwidth]{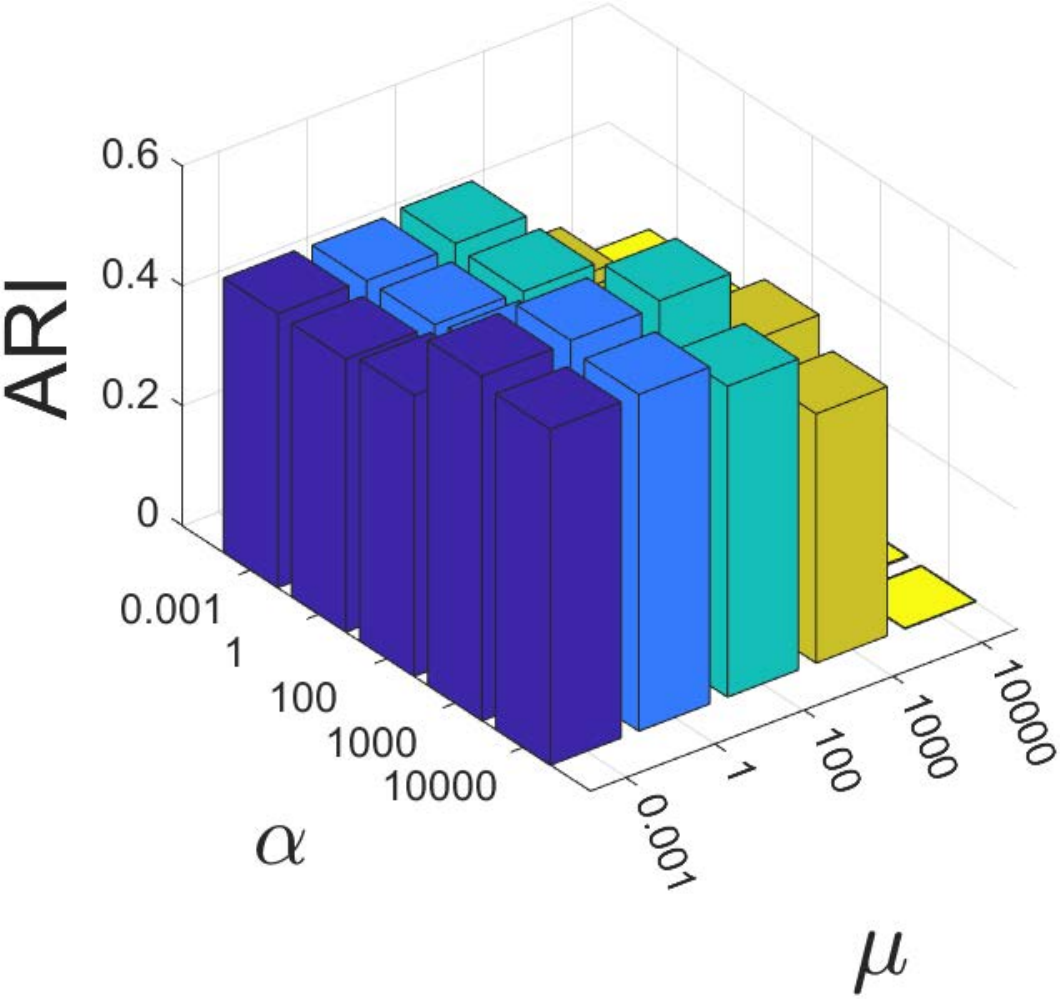}
    				\includegraphics[width=0.23\textwidth]{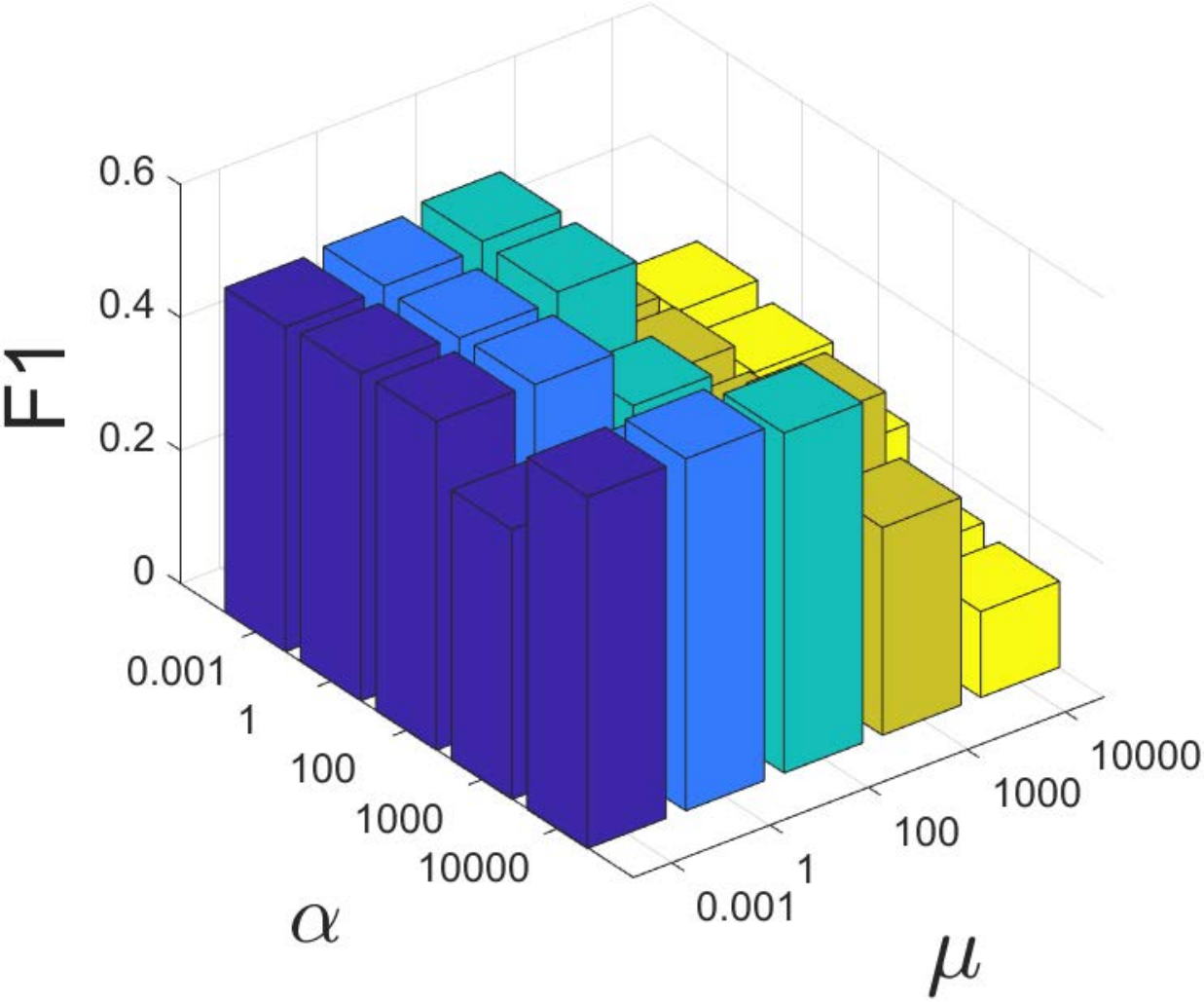}
    				\includegraphics[width=0.23\textwidth]{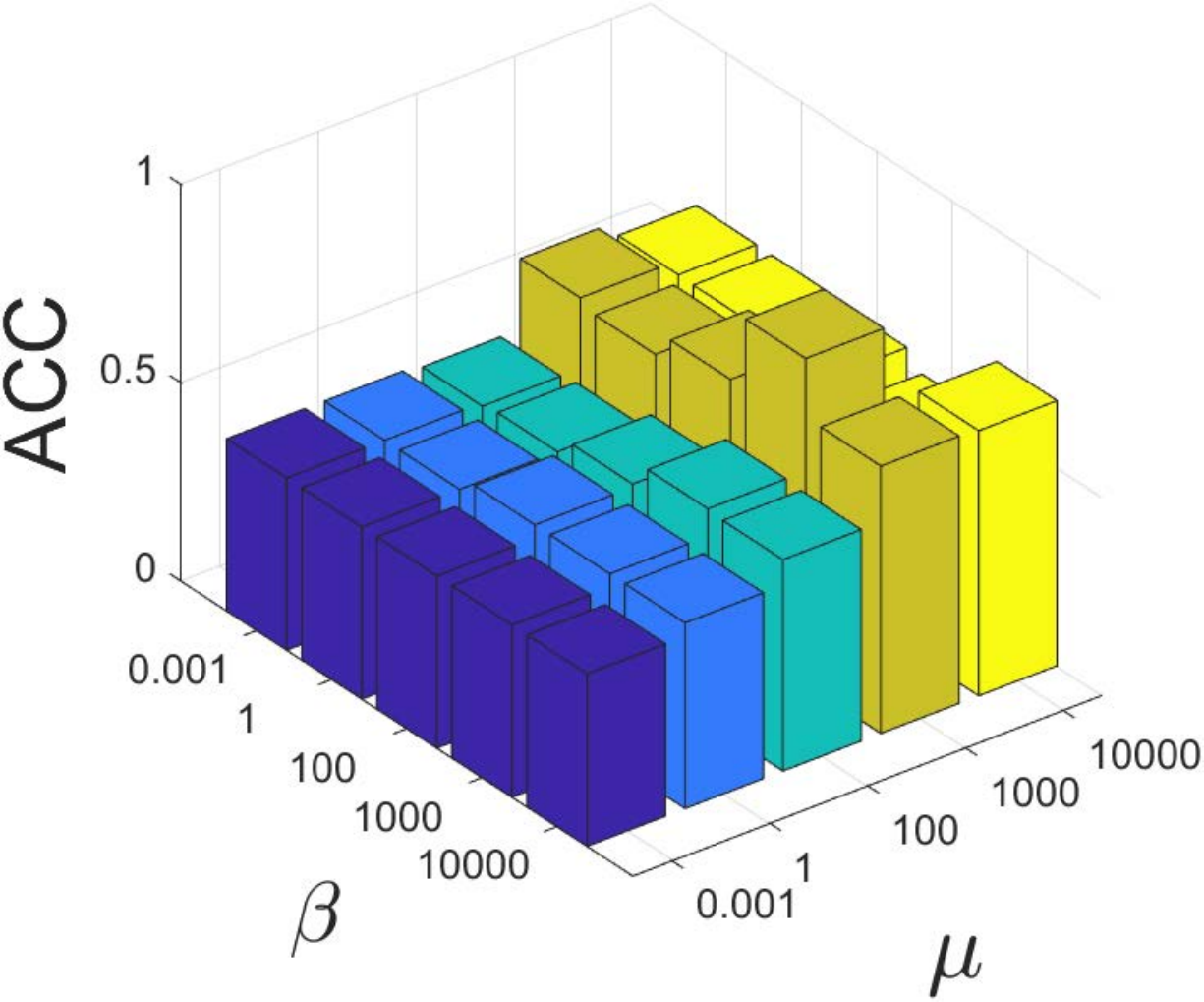}
    				\includegraphics[width=0.23\textwidth]{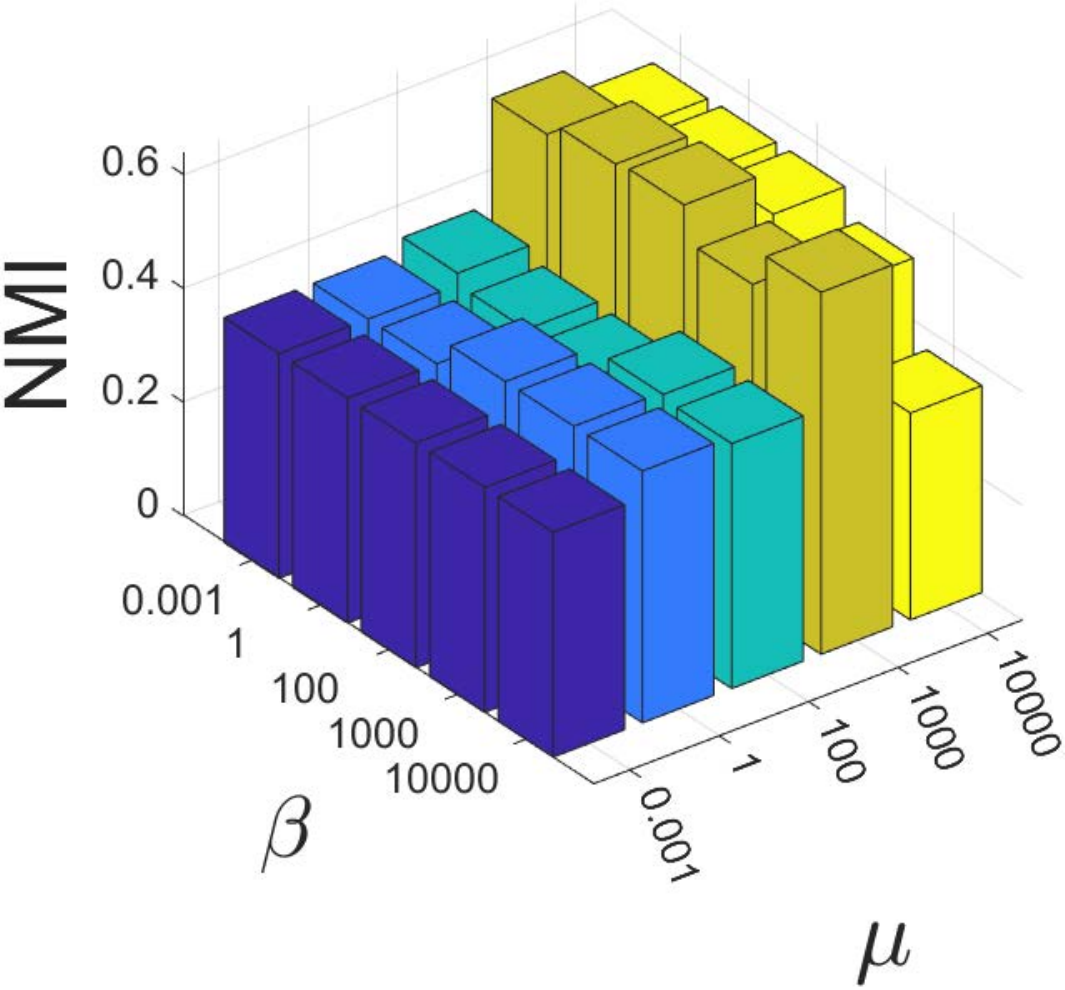}
    				\includegraphics[width=0.23\textwidth]{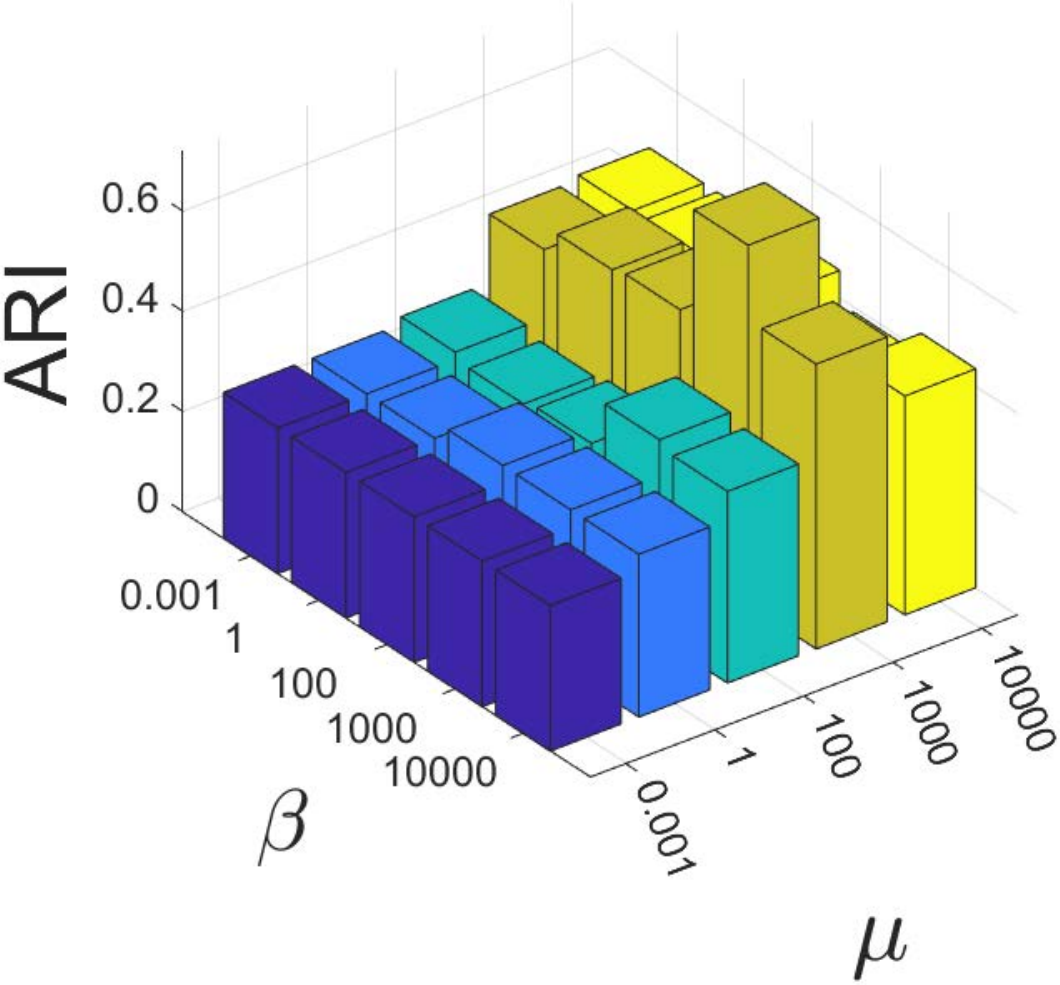}
    				\includegraphics[width=0.23\textwidth]{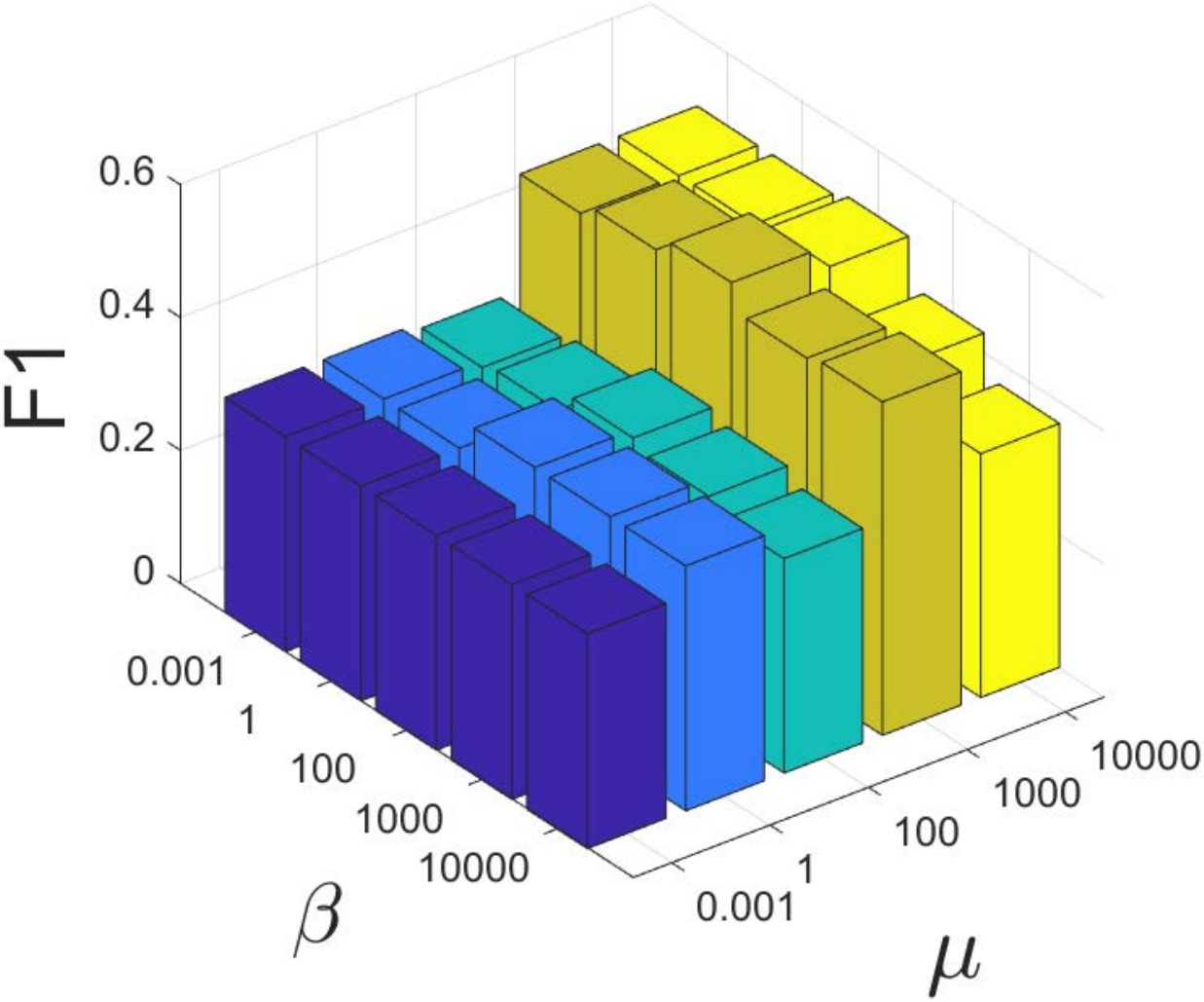}
    				\caption{Sensitivity analysis of parameters for our method on Caltech7 with different $\alpha$, $\beta$, and $\mu$}
    				\label{paras}
    			\end{figure*}
            
			\section{Conclusion}
			\indent In this paper, we develop a generic multi-view clustering method, which effectively explores the high-order information in both topology and feature space. To obtain a smooth representation and unify the process of graph and non-graph data, we apply graph filtering technique to handle the raw features. To capture high-order feature information, we define the high-order and infinity-order graphs by leveraging the powers of similarity matrix and convergent matrix. To solve the scalability issue on medium-size or large-scale dataset, we use the anchor idea to construct scalable high-order graphs. Our approach achieves promising performance on generic data. Therefore, it is beneficial to treat non-graph data from the perspective of graph. In addition, the proposed infinity graph is general to be broadly applied in many other machine learning tasks.  However, one potential limitation of our work is that the graph construction consumes time and space. In future work, we plan to improve the graph filter and construction strategy of one-time similarity graph.

			
			
        \bibliographystyle{IEEEtran}
        \bibliography{reference}
\begin{IEEEbiographynophoto}{Erlin Pan} received the B.Sc.
degree in computer science and technology in 2021 and is currently
studying for a M.S degree with School of Computer Science and Engineering, the University of Electronic Science and Technology of China,  Chengdu, China.His main research interests include graph leanring,
multi-view learning, and clustering.
\end{IEEEbiographynophoto}

\begin{IEEEbiographynophoto}{Zhao Kang} received the Ph.D. degree in computer science from Southern Illinois University Carbondale, Carbondale, IL, USA, in 2017. He is currently an Associate Professor with the School of Computer Science and Engineering, University of Electronic Science and Technology of China, Chengdu, China. He has published over 80 research papers in top-tier conferences and journals, including \textit{NeurIPS}, \textit{AAAI}, \textit{IJCAI}, \textit{\textit{ICDE}, \textit{CVPR}, \textit{SIGKDD}, \textit{ECCV}, \textit{ICDM}, \textit{CIKM}, \textit{SDM}, \textit{IEEE TRANSACTIONS ON CYBERNETICS, IEEE Transactions on Image Processing}, \textit{IEEE Transactions on Knowledge and Data Engineering}, and \textit{IEEE Transactions on Neural Networks and Learning Systems}. His research interests are machine learning, pattern recognition, and data mining. Dr. Kang has been an AC/SPC/PC Member or a Reviewer for a number of top conferences, such as \textit{NeurIPS}, \textit{ICLR}, \textit{AAAI}, \textit{IJCAI}, \textit{CVPR}, \textit{ICCV}, \textit{MM}, and \textit{ECCV}. He regularly serves as a Reviewer for \textit{the Journal of Machine Learning Research}, \textit{IEEE TRANSACTIONS ON PATTERN ANALYSIS AND MACHINE INTELLIGENCE}, \textit{IEEE TRANSACTIONS ON NEURAL NETWORKS AND LEARNING SYSTEMS}, \textit{IEEE TRANSACTIONS ON CYBERNETICS}, IEEE TRANSACTIONS ON KNOWLEDGE AND DATA ENGINEERING}, and \textit{IEEE TRANSACTIONS ON MULTIMEDIA}.
\end{IEEEbiographynophoto}

\end{document}